\def\eqref#1{equation~\ref{#1}}
\def\1{\bm{1}}
\DeclareMathAlphabet{\mathsfit}{\encodingdefault}{\sfdefault}{m}{sl}
\SetMathAlphabet{\mathsfit}{bold}{\encodingdefault}{\sfdefault}{bx}{n}
\renewcommand*\backref[1]{\ifx#1\relax\else (Cited on Page #1) \fi}
\newcommand{\out}[1]{}
\newcommand\cca[1]{{%
  \ooalign{\raisebox{-.4ex}{\larger[4]$\circlearrowright$}\cr
    \hidewidth$\,#1$\hidewidth}}}
\newcommand\crule[3][black]{\textcolor{#1}{\rule{#2}{#3}}}
\newcommand{\new}[1]{{\color{black}{#1}}}
\newtheorem{definition}{{Definition}}
\newtheorem{proposition}{{Proposition}}
\newtheorem{lemma}{{Lemma}}
\definecolor{wgreen}{HTML}{00E89D}
\definecolor{wblue}{HTML}{67AEFF}
\definecolor{wgrey}{HTML}{BEC2C5}
\title{VN-Transformer: Rotation-Equivariant Attention for Vector Neurons}
      \author{%
  \name Serge Assaad \thanks{Work done during an internship at Waymo LLC.} \email serge.assaad@duke.edu\\
  \addr Duke University
  \AND
  \name Carlton Downey \email cmdowney@waymo.com\\
  \addr Waymo LLC
  \AND
  \name Rami Al-Rfou \email rmyeid@waymo.com\\
  \addr Waymo LLC
  \AND
  \name Nigamaa Nayakanti \email nigamaa@waymo.com\\
  \addr Waymo LLC
  \AND
  \name Ben Sapp \email bensapp@waymo.com\\
  \addr Waymo LLC
}
\begin{document}

\maketitle

\begin{abstract}
Rotation equivariance is a desirable property in many practical applications such as motion forecasting and 3D perception, where it can offer benefits like sample efficiency, better generalization, and robustness to input perturbations.
Vector Neurons (VN) is a recently developed framework offering a simple yet effective approach for deriving rotation-equivariant analogs of standard machine learning operations by extending one-dimensional scalar neurons to three-dimensional ``vector neurons.''
We introduce a novel ``VN-Transformer'' architecture to address several shortcomings of the current VN models. Our contributions are:
$(i)$ we derive a rotation-equivariant attention mechanism which eliminates the need for the heavy feature preprocessing required by the original Vector Neurons models; $(ii)$ we extend the VN framework to support non-spatial attributes, expanding the applicability of these models to real-world datasets; $(iii)$ we derive a rotation-equivariant mechanism for multi-scale reduction of point-cloud resolution, greatly speeding up inference and training; $(iv)$ we show that small tradeoffs in equivariance ($\epsilon$-approximate equivariance) can be used to obtain large improvements in numerical stability and training robustness on accelerated hardware, and we bound the propagation of equivariance violations in our models.
Finally, we apply our VN-Transformer to 3D shape classification and motion forecasting with compelling results.
\end{abstract}

\section{Introduction}\label{intro}
\begin{minipage}{0.65\textwidth}
A chair -- seen from the front, the back, the top, or the side -- is still a chair. When driving a car, our driving behavior is independent of our direction of travel. These simple examples demonstrate how humans excel at using rotation invariance and equivariance to understand the world in context (see figure on the right).
Unfortunately, typical machine learning models struggle to preserve equivariance/invariance when appropriate -- \new{it is indeed challenging to equip neural networks with the right inductive biases to represent 3D objects in an equivariant manner.} \vspace{0.05in}

Modeling spatial data is a core component in many domains such as CAD, AR/VR, and medical imaging applications. In assistive robotics and autonomous vehicle applications, 3D object detection, tracking, and motion forecasting form the basis for how a robot interacts with\vspace{0.02in}
\end{minipage}\hspace{0.02\textwidth}
\begin{minipage}{0.3\textwidth}
\vspace{-0.1in}
\includegraphics[width=\textwidth]{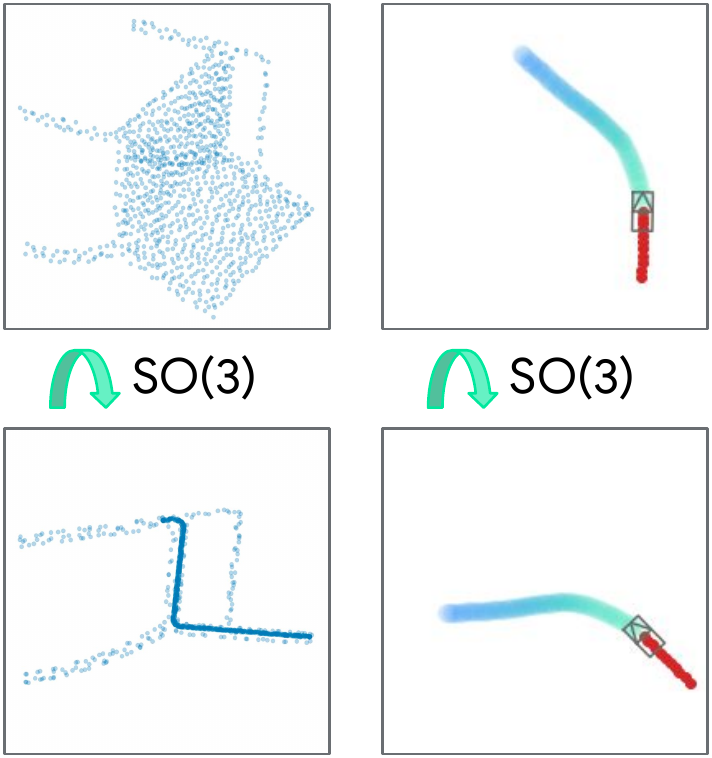}
\end{minipage}\\
humans in the real world. Preserving rotation invariance/equivariance can improve training time, reduce model size, and provide crucial guarantees about model performance in the presence of noise.
 Spatial data is often represented as a point-cloud data structure.
These point-clouds require both permutation invariance and rotation equivariance to be modeled sufficiently well. Approaches addressing permutation invariance include \citet{zaheer2018deep, lee2019set, qi2017pointnet}. 

Recently, approaches jointly addressing rotation invariance and equivariance are gaining momentum.
These can roughly be categorized into modeling invariance or equivariance by: \textit{(i)} data augmentation,
\textit{(ii)} canonical pose estimation, and \textit{(iii)} model construction.
Approaches $(i)$ and $(ii)$ do not guarantee exact equivariance
since they rely on model parameters to learn the right inductive biases~\citep{qi2017pointnet,esteves2018polar,jaderberg2015spatial,chang2015shapenet}. Further, data augmentation makes training more costly and errors in pose estimation propagate to downstream tasks, degrading model performance. Moreover, pose estimation requires labeling objects with their observed poses.
In contrast, proposals in $(iii)$ \textit{do} provide equivariance guarantees, including the Tensor Field Networks of \citet{thomas2018tensor} and the SE(3)-Transformer of \citet{fuchs2020se3transformers} (see Section \ref{sec:related}).
However, their formulations require complex mathematical machinery or are limited to specific network architectures.\vspace{0.1in}

Most recently, \citet{deng2021vector} proposed a simple and generalizable framework, dubbed \textit{Vector Neurons} (VNs), that can be used to replace traditional building blocks of neural networks with rotation-equivariant analogs.
The basis of the framework is lifting scalar neurons to 3-dimensional vectors, which admit simple mappings of $\textup{SO}(3)$ actions to latent spaces.\vspace{0.1in}

While \citet{deng2021vector} developed a framework and basic layers, many issues required for practical deployment on real-world applications remain unaddressed.  A summary of our contributions and their motivations are as follows:

\textbf{VN-Transformer.} The use of Transformers in deep learning has exploded in popularity in recent years as the de facto standard mechanism for learned soft attention over input and latent representations. They have enjoyed many successes in image and natural language understanding~\citep{vision_transformers_survey, vaswani2017attention}, and they have become an essential modeling component in most domains.  Our primary contribution of this paper is to develop a VN formulation of soft attention by generalizing scalar inner-product based attention to matrix inner-products (\textit{i.e.}, the Frobenius inner product).
Thanks to Transformers' ability to model functions on sets (since they are permutation-equivariant), they are a natural fit to model functions on point-clouds. Our VN-Transformer possesses all the appealing properties that have made the original Transformer so successful, as well as rotation equivariance as an added benefit.

\textbf{Direct point-set input.} The original VN paper relied on edge convolution as a pre-processing step to capture local point-cloud structure.
Such feature engineering is not data-driven and requires human involvement in designing and tuning.
Moreover, the sparsity of these computations makes them slow to run 

\begin{minipage}[T]{0.53\textwidth}
on accelerated hardware.
Our proposed rotation-equivariant attention mechanism learns higher-level features directly from single points for arbitrary point-clouds (see Section \ref{sec:vntransformer}).\\\vspace{0.02in}

\textbf{Handling points augmented with non-spatial attributes.} Real-world point-cloud datasets have complicated features sets -- the $[x,y,z]$ spatial dimensions are typically augmented with crucial non-spatial attributes $[[x,y,z];[a]]$ where $a$ can be high-dimensional.
For example, Lidar point-clouds have intensity \& elongation values associated with each point, multi-sensor point-clouds have modality types, and point-clouds with semantic type have semantic attributes.
The VN framework restricted the scope of their work to spatial point-cloud data, limiting the applicability of their models for real-world point-clouds with attributes.
We investigate two mechanisms to integrate attributes into equivariant models while preserving rotation equivariance (see Section \ref{sec:attributedpoints}).\\\vspace{0.02in}

\textbf{Equivariant multi-scale feature reduction.} Practical data structures such as Lidar point-clouds are extremely large, consisting of hundreds of objects each with potentially millions of points. To handle such computationally
\end{minipage}\hspace{0.02\textwidth}
\begin{minipage}[T]{0.452\textwidth}
\vspace{-0.2in}
\ffigbox[\textwidth]{%
 \begin{subfloatrow}
  \ffigbox[0.46\textwidth]{
  \includegraphics[width=0.3825\textwidth]{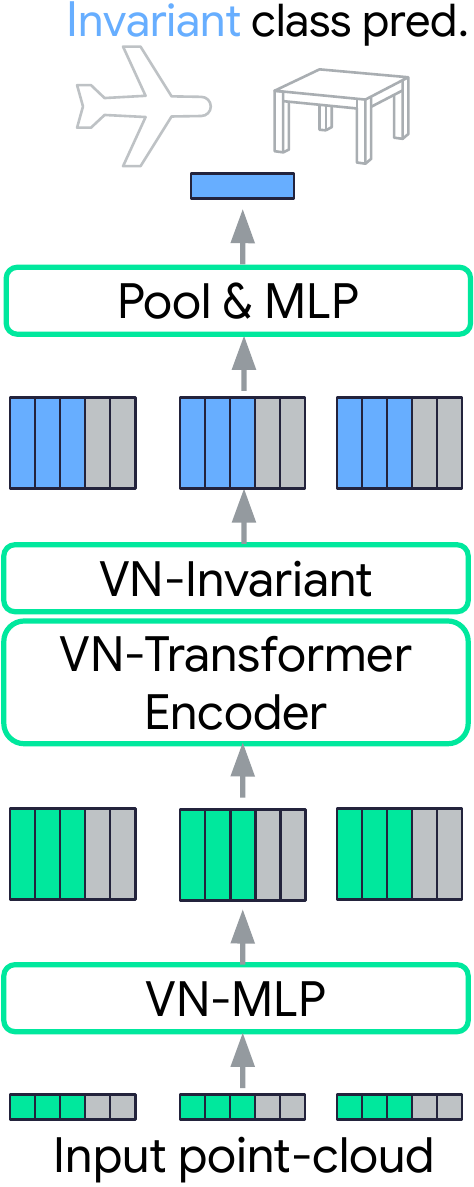}
  }{\caption{Rotation-invariant classification model.}
    \label{fig:vn_transformer_classifier_early_fusion}}
  \ffigbox[0.46\textwidth]{%
  \includegraphics[width=0.45\textwidth]{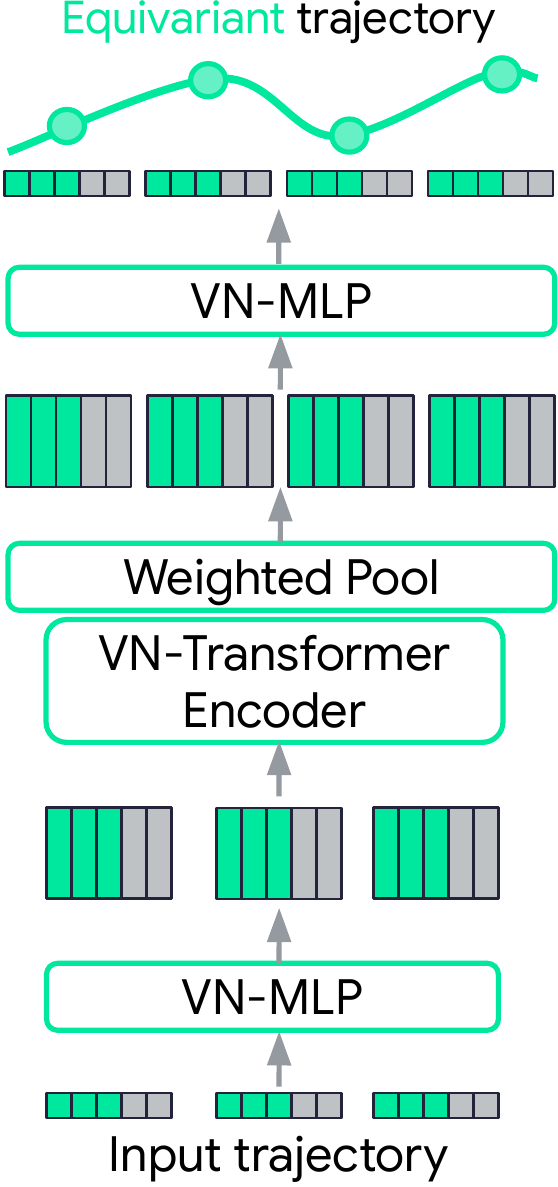}}{\caption{Rotation-equivariant trajectory forecasting model.}
    \label{fig:vn_transformer_trajectory_early_fusion}}
  \end{subfloatrow}%
 }{\caption{VN-Transformer (``early fusion'') models. Legend: (\raisebox{-0.8pt}{\crule[wgreen]{6pt}{6pt}}) SO(3)-equivariant features; (\raisebox{-0.8pt}{\crule[wblue]{6pt}{6pt}}) SO(3)-invariant features; (\raisebox{-0.8pt}{\crule[wgrey]{6pt}{6pt}}) Non-spatial features.}\label{fig:early_fusion}}
\end{minipage}\vspace{0.02in}
challenging situations, we design a rotation-equivariant mechanism for multi-scale reduction of point-cloud resolution. This mechanism learns how to pool the point set in a context-sensitive manner leading to a significant reduction in training and inference latency (see Section \ref{sec:latent}).

\textbf{$\epsilon$-approximate equivariance.} When attempting to scale up VN models for distributed accelerated hardware we observed significant numerical stability issues.
We determined that these stemmed from a fundamental limitation of the original VN framework, where bias values could not be included in linear layers while preserving equivariance.
We introduce the notion of $\epsilon$-approximate equivariance, and use it to show that small tradeoffs in equivariance can be controlled to obtain large improvements in numerical stability via the addition of small biases, improving robustness of training on accelerated hardware. Additionally, we theoretically bound the propagation of rotation equivariance violations in VN networks (see Section \ref{sec:epsiloneqvar}).

\textbf{Empirical analysis.} Finally, we evaluate our VN-Transformer on $(i)$ the ModelNet40 shape classification task, $(ii)$ a modified ModelNet40 which includes per-point non-spatial attributes, and $(iii)$ a modified version of the Waymo Open Motion Dataset trajectory forecasting task (see Section \ref{sec:experiments}).
\section{Related work}\label{sec:related}
The machine learning community has long been interested in building models that achieve equivariance to certain transformations, \textit{e.g.}, permutations, translations, and rotations. For a thorough review, see \citet{Bronstein2021GeometricDL}.

\textbf{Learned approximate transformation invariance.} A very common approach is to learn robustness to input transforms via data augmentation~\citep{zhou2018voxelnet,qi2018frustum,krizhevsky2012imagenet,lang2019pointpillars,yang2018pixor} or by explicitly predicting transforms to canonicalize pose~\citep{jaderberg2015spatial, hinton2018capsules, esteves2018polar, chang2015shapenet}. 

\textbf{Rotation-equivariant CNNs.} Recently, there has been specific interest in designing rotation-equivariant image models for 2D perception tasks \citep{pmlr-v48-cohenc16,Worrall_2017_CVPR,Marcos_2017_ICCV,chidester2018rotation}.
\citet{Worrall_2018_ECCV} extended this work to 3D perception, and \citet{veeling2018rotation} demonstrated the promise of rotation-equivariant models for medical images.


\textbf{Equivariant point-cloud models.}
\citet{thomas2018tensor} proposed Tensor Field Networks (TFNs), which use tensor representations of point-cloud data, Clebsch-Gordan coefficients, and spherical harmonic filters to build rotation-equivariant models.
\citet{fuchs2020se3transformers} propose an ``SE(3)-Transformer'' by adding an attention mechanism for TFNs. One of the key ideas behind this body of work is to create highly restricted weight matrices that commute with rotation operations by construction (\textit{i.e.}, $WR = RW$). In contrast, we propose a simpler alternative: a ``VN-Transformer'' which guarantees equivariance for arbitrary weight matrices, removing the need for the complex mathematical machinery of the SE(3)-Transformer. For a detailed comparison with \citet{fuchs2020se3transformers}, see Appendix \ref{sec:se3transformer_comparison}.

\textbf{Controllable approximate equivariance.} \citet{finzi2021residual} proposed equivariant priors on model weight matrices to achieve approximate equivariance, and \citet{wang2021approximate} proposed a relaxed steerable 2D convolution along with soft equivariance regularization. In this work, we introduce the related notion of ``$\epsilon$-approximate equivariance,'' achieved by adding biases with small and controllable norms. We theoretically bound the equivariance violation introduced by this bias, and we also bound how such violations propagate through deep VN networks.

\new{\textbf{Non-spatial attributes.} TFNs and the SE(3)-Transformer account for non-spatial attributes associated with each point (\textit{e.g.}, color, intensity), which they refer to as ``type-$0$'' features. In this work, we investigate two mechanisms (early \& late fusion) to incorporate non-spatial data into the VN framework.}

\textbf{Attention-based architectures.} Since the introduction of Transformers by \citet{vaswani2017attention}, self-attention and cross-attention mechanisms have provided powerful and versatile components which are propelling the field of natural language processing forward \citep{devlin2019bert,liu2019roberta,lan2020albert,yang2020xlnet}.
Lately, so-called ``Vision Transformers'' \citep{dosovitskiy2021image,vision_transformers_survey} have had a similar impact on the field of computer vision, providing a compelling alternative to convolutional networks.

\section{Background}
\subsection{Notation \& preliminaries}
\textbf{Dataset.} Suppose we have a dataset $\mathcal{D} \triangleq \{X_p,Y_p\}_{p=1}^P$, where $p\in\{1,\ldots,P\}$ is an index into a point-cloud/label pair $\{X_p,Y_p\}$ -- we omit the subscript $p$ whenever it is unambiguous to do so. $X \in \mathcal{X} \subset \mathbb{R}^{N\times 3}$ is a single 3D point-cloud with $N$ points. In a classification problem, $Y\in\mathcal{Y}\subset\{1,\ldots,\kappa\}$, where $\kappa$ is the number of classes. In a regression problem, we might have $Y\in \mathcal{Y} \subset \mathbb{R}^{N_\text{out}\times S_\text{out}}$ where $N_\text{out}$ is the number of output points, and $S_\text{out}$ is the dimension of each output point (with $N_\text{out} = S_\text{out}= 1$ corresponding to univariate regression).

\textbf{Index notation.}
We use ``numpy-like'' indexing of tensors. Assuming we have a tensor $Z\in\mathbb{R}^{A\times B \times C}$, we present some examples of this indexing scheme: $Z^{(a)}\in\mathbb{R}^{B\times C},~~Z^{(:,:,c)}\in\mathbb{R}^{A\times B},~~Z^{(a_\text{lo}:a_\text{hi})}\in\mathbb{R}^{(a_\text{hi}-a_\text{lo}+1)\times B \times C}.$

\textbf{Rotations \& weights.} Suppose we have a tensor $V\in\mathbb{R}^{N\times C \times 3}$ and a rotation matrix $R\in \text{SO}(3)$, where $\text{SO}(3)$ is the three-dimensional rotation group. We denote the ``rotation'' of the tensor by $VR \in\mathbb{R}^{N\times C \times 3}$, defined as: $(VR)^{(n)}\triangleq V^{(n)}R,~~\forall n \in \{1,\ldots,N\}$ -- in other words, the rotated tensor $VR$ is simply the concatenation of the $N$ individually rotated matrices $V^{(n)}R\in \mathbb{R}^{C\times 3}$. Additionally, if we have a matrix of weights $W\in\mathbb{R}^{C'\times C}$, we define the product $WV\in\mathbb{R}^{N\times C' \times 3}$ by $(WV)^{(n)} \triangleq  WV^{(n)}$.

\textbf{Invariance and equivariance.}
\begin{definition}[Rotation Invariance]
$f: \mathcal{X}\rightarrow \mathcal{Y}$ is rotation-invariant if~~$\forall R\in \textup{SO}(3),~X\in\mathcal{X},~~f(XR) = f(X)$.
\end{definition}
\begin{definition}[Rotation Equivariance]
$f: \mathcal{X}\rightarrow \mathcal{Y}$ is rotation-equivariant if~~$\forall R\in \textup{SO}(3),~X\in\mathcal{X},~~f(XR) = f(X)R$.
\end{definition}
For simplicity, we defined invariance/equivariance as above instead of the more general $f(X\rho_X(g)) = f(X)\rho_Y(g)$, which requires background on group theory and representation theory.

\textbf{Proofs.} We defer proofs to Appendix \ref{sec:proofs}.

\subsection{The Vector Neuron (VN) framework}
\begin{minipage}{0.6\textwidth}
In the Vector Neuron framework \citep{deng2021vector}, the authors represent a single point (\textit{e.g.}, in a hidden layer of a neural network) as a matrix $V^{(n)}\in\mathbb{R}^{C\times 3}$ (see inset figure), where $V\in\mathbb{R}^{N\times C \times 3}$ can be thought of as a tensor representation of the entire point-cloud. This representation allows for the design of \textup{SO(3)}-equivariant analogs of standard neural network operations.
\end{minipage}\hspace{0.02\textwidth}
\begin{minipage}{0.35\textwidth}
\includegraphics[width=\textwidth]{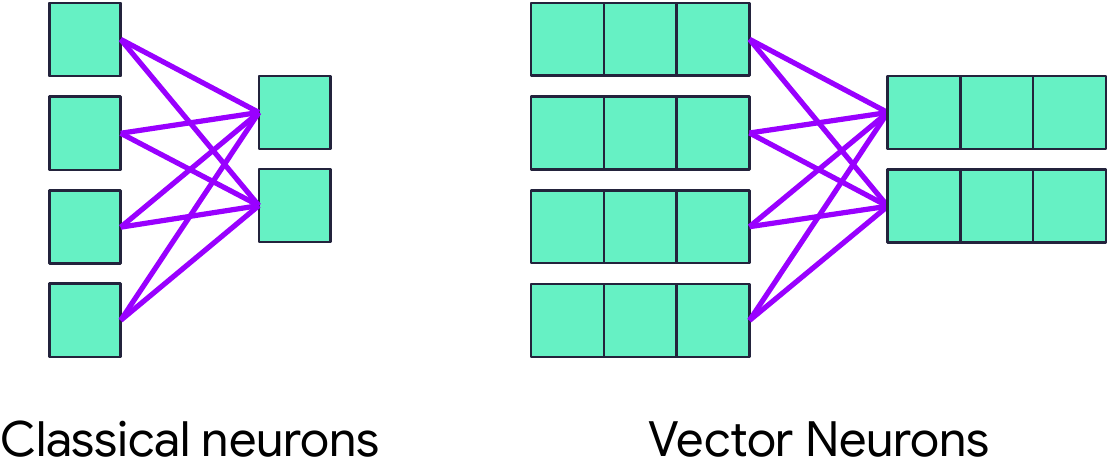}
\end{minipage}

\textbf{VN-Linear layer.} As an illustrative example, the VN-Linear layer is a function $\text{VN-Linear}(\cdot~; W): \mathbb{R}^{C\times 3}\rightarrow \mathbb{R}^{C'\times 3}$, defined by $
    \text{VN-Linear}(V^{(n)}; W) \triangleq WV^{(n)}$,
where $W\in\mathbb{R}^{C'\times C}$ is a matrix of learnable weights.
This operation is rotation-equivariant: $\text{VN-Linear}(V^{(n)}R; W) = WV^{(n)}R = (WV^{(n)})R = \text{VN-Linear}(V^{(n)}; W)R$.

\citet{deng2021vector} also develop VN analogs of common deep network layers ReLU, MLP, BatchNorm, and Pool. For further definitions and proofs of equivariance, see Appendix~\ref{sec:proofs}.  For further details, we point the reader to \citet{deng2021vector}. 

\section{The VN-Transformer}
\label{sec:vntransformer}
In this section, we extend the ideas presented in \citet{deng2021vector} to design a ``VN-Transformer'' that enjoys the rotation equivariance property.

\out{This section is structured as follows: $(i)$, we construct a rotation-equivariant attention operation; $(ii)$ we then use it to build the VN-Transformer; $(iii)$ to reduce computational complexity of the VN-Transformer, we provide a rotation-equivariant analog to the latent feature mechanism seen in \citet{lee2019set,jaegle2021perceiver}.}
\subsection{Rotation-invariant inner product}
The notion of an inner product between tokens is central to the attention operation from the original Transformer~\citep{vaswani2017attention}.
Consider the Frobenius inner product between two VN representations, defined below.
\begin{definition}[Frobenius inner product]
The Frobenius inner product between two matrices $V^{(n)},V^{(n')}\in\mathbb{R}^{C\times 3}$ is defined by $
    \langle V^{(n)},V^{(n')}\rangle_F \triangleq \sum_{c=1}^C\sum_{s=1}^3 V^{(n,c,s)}V^{(n',c,s)} = \sum_{c=1}^C V^{(n,c)}V^{(n',c)\intercal}.$
\end{definition}
This choice of inner product is convenient because of its rotation invariance property, stated below.
\begin{proposition}\label{prop:frobenius_invariance}
The Frobenius inner product between Vector Neuron representations $V^{(n)},V^{(n')}\in\mathbb{R}^{C\times 3}$ is rotation-invariant, i.e.
    $\langle V^{(n)}R,V^{(n')}R\rangle_F = \langle V^{(n)},V^{(n')}\rangle_F,~~\forall R\in \textup{SO}(3).$
\end{proposition}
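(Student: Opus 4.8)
The plan is to reduce the claim to the single algebraic fact that rotation matrices are orthogonal, i.e. $RR^\intercal = I_3$ for every $R \in \textup{SO}(3)$. I would work from the second (row-wise) form of the definition, $\langle V^{(n)}, V^{(n')}\rangle_F = \sum_{c=1}^C V^{(n,c)}V^{(n',c)\intercal}$, since it cleanly isolates how the $3\times 3$ rotation acts on the spatial axis of each channel.

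First I would expand the rotated inner product using the definition of the tensor rotation from the preliminaries: since $(VR)^{(n,c)} = V^{(n,c)}R$, we have $\langle V^{(n)}R, V^{(n')}R\rangle_F = \sum_{c=1}^C (V^{(n,c)}R)(V^{(n',c)}R)^\intercal$. Applying $(V^{(n',c)}R)^\intercal = R^\intercal V^{(n',c)\intercal}$ rewrites each summand as $V^{(n,c)}\,RR^\intercal\,V^{(n',c)\intercal}$. The key step is then to invoke orthogonality, $RR^\intercal = I_3$, which collapses each summand to $V^{(n,c)}V^{(n',c)\intercal}$; summing over $c$ recovers exactly $\langle V^{(n)}, V^{(n')}\rangle_F$. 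Equivalently, one can phrase the whole computation in trace form, $\langle A, B\rangle_F = \Tr(A B^\intercal)$, and use $\Tr(V^{(n)}RR^\intercal V^{(n')\intercal}) = \Tr(V^{(n)}V^{(n')\intercal})$; this variant hides the per-channel bookkeeping at the cost of invoking cyclicity of the trace.

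There is no substantive obstacle here: the only point requiring care is that $R$ acts on the \emph{right}, i.e. on the $3$-dimensional spatial axis rather than on the $C$ channel axis, so it is precisely the product $RR^\intercal$ that appears and cancels (as opposed to an $R^\intercal R$ acting on channels). The orthogonality $RR^\intercal = I_3$, a defining property of $\textup{SO}(3)$, is the entire content of the proof.
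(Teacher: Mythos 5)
Your proof is correct and follows exactly the same route as the paper's: expand the row-wise form of the Frobenius inner product, pull out the transposes to expose $RR^\intercal$, and cancel it by orthogonality of $R\in\textup{SO}(3)$. The trace-form variant you mention is a cosmetic repackaging of the same argument, so there is nothing further to reconcile.
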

\new{
\begin{proof}
\begin{small}
\begin{align}
\langle V^{(n)}R,V^{(n')}R\rangle_F = \sum_{c=1}^C (V^{(n,c)}R)(V^{(n',c)}R)^\intercal = \sum_{c=1}^C V^{(n,c)}RR^\intercal V^{(n',c)\intercal} = \sum_{c=1}^C V^{(n,c)}V^{(n',c)\intercal} = \langle V^{(n)},V^{(n')}\rangle_F 
\end{align}
\end{small}
\end{proof}}
This rotation-invariant inner product between VN representations allows us to construct a rotation-equivariant attention operation, detailed in the next section.
\subsection{Rotation-equivariant attention}
Consider two tensors $Q\in \mathbb{R}^{M\times C \times 3}$ and $K\in \mathbb{R}^{N\times C \times 3}$, which can be thought of as sets of $M$ (resp. $N$) tokens, each a $C\times 3$ matrix.
Using the Frobenius inner product, we can define an attention matrix $A(Q,K)\in\mathbb{R}^{M\times N}$ between the two sets as follows:
\begin{small}
\begin{align}
A(Q,K)^{(m)} \triangleq \text{softmax}\Bigl(\frac{1}{\sqrt{3C}}\left[\langle Q^{(m)},K^{(n)} \rangle_F\right]_{n=1}^N\Bigr), \label{eq:cross_attn}
\end{align}
\end{small}
\begin{minipage}{0.65\textwidth}
This attention operation is rotation-equivariant w.r.t. simultaneous rotation of all inputs:
Following \citet{vaswani2017attention}, we divide the inner products by $\sqrt{3C}$ since $Q^{(m)},K^{(n)}\in\mathbb{R}^{C\times 3}$. From Proposition \ref{prop:frobenius_invariance}, $A(QR,KR) = A(Q,K)~~\forall R\in \textup{SO(3)}$.\\
Finally, we define the operation $\text{VN-Attn}:\mathbb{R}^{M\times C \times 3}\times \mathbb{R}^{N\times C \times 3}\times \mathbb{R}^{N\times C'\times 3}\rightarrow \mathbb{R}^{M\times C' \times 3}$ as:
\begin{small}
\begin{align}
    \text{VN-Attn}(Q,K,Z)^{(m)} \triangleq \sum_{n=1}^N A(Q,K)^{(m,n)}Z^{(n)}.\label{eq:f_attn}
\end{align}
\end{small}
\begin{proposition} \begin{small}$\textup{VN-Attn}(QR,KR,ZR) = \textup{VN-Attn}(Q,K,Z)R.$\end{small}
\end{proposition}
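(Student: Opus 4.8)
The plan is to show that $\text{VN-Attn}$ is rotation-equivariant by exploiting the two facts already established: the attention matrix $A(Q,K)$ is rotation-\emph{invariant} (a direct consequence of Proposition~\ref{prop:frobenius_invariance}, since each entry is a Frobenius inner product), and the value aggregation is just a weighted sum of the value tokens $Z^{(n)}$, with the rotation acting only by right-multiplication on each $Z^{(n)}$. The key observation is that the rotation $R$ factors out of the scalar weights entirely and commutes past the linear combination.

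First I would write out $\text{VN-Attn}(QR,KR,ZR)^{(m)}$ from the definition in \eqref{eq:f_attn}, substituting the rotated inputs. The value tokens become $(ZR)^{(n)} = Z^{(n)}R$ by the definition of tensor rotation given in the preliminaries. Second, I would replace the attention weights $A(QR,KR)^{(m,n)}$ by $A(Q,K)^{(m,n)}$, invoking the rotation-invariance of the attention matrix already noted after \eqref{eq:cross_attn}. Third, since each $A(Q,K)^{(m,n)}$ is a scalar and $R$ is a fixed matrix independent of the summation index $n$, I would factor $R$ out of the sum:
\begin{small}
\begin{align}
\text{VN-Attn}(QR,KR,ZR)^{(m)} = \sum_{n=1}^N A(Q,K)^{(m,n)} Z^{(n)} R = \Bigl(\sum_{n=1}^N A(Q,K)^{(m,n)} Z^{(n)}\Bigr) R = \text{VN-Attn}(Q,K,Z)^{(m)} R. \nonumber
\end{align}
\end{small}
Since this holds for every token index $m$, the equivariance claim follows from the definition of tensor rotation.

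There is no genuine obstacle here; the argument is essentially bookkeeping. The only point that deserves care is making the invariance substitution explicit: one must confirm that $A(QR,KR) = A(Q,K)$ applies entrywise, which is immediate because each entry of $A$ is a softmax of Frobenius inner products $\langle Q^{(m)}, K^{(n)}\rangle_F$, and the softmax and the scaling factor $\tfrac{1}{\sqrt{3C}}$ are unaffected once the inner products themselves are shown invariant by Proposition~\ref{prop:frobenius_invariance}. The mild subtlety worth stating clearly is that the rotation acts by right-multiplication and the weights are scalars, so associativity of matrix multiplication over the scalar-weighted sum is what lets $R$ slide out to the right.
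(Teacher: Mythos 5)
Your proof is correct and follows exactly the paper's own argument: expand the definition of $\textup{VN-Attn}$ on rotated inputs, substitute $A(QR,KR)=A(Q,K)$ using Proposition~\ref{prop:frobenius_invariance} together with equation~\ref{eq:cross_attn}, and factor $R$ out of the scalar-weighted sum. The extra remarks you include (entrywise invariance through the softmax, and $R$ sliding out by associativity) simply make explicit what the paper leaves implicit, so there is nothing to add or correct.
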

\begin{proof}
\begin{small}
\begin{align}
&\text{VN-Attn}(QR,KR,ZR)^{(m)} = \sum_{n=1}^N A(QR,KR)^{(m,n)}Z^{(n)}R \\&\overset{(*)}{=}\left[\sum_{n=1}^N A(Q,K)^{(m,n)}Z^{(n)}\right]R = \text{VN-Attn}(Q,K,Z)^{(m)}R,
\end{align}
\end{small}
where $(*)$ holds since $A(QR,KR) = A(Q,K)$ (which follows straightforwardly from Proposition \ref{prop:frobenius_invariance} and equation \ref{eq:cross_attn}).
\end{proof}
\vspace{0.05in}
This is extendable to multi-head attention with $H$ heads, $\text{VN-MultiHeadAttn}: \mathbb{R}^{M\times C \times 3}\times \mathbb{R}^{N\times C \times 3} \times \mathbb{R}^{N\times C'\times 3}\rightarrow \mathbb{R}^{M\times C' \times 3}$:

\end{minipage}\hspace{0.02\textwidth}
\begin{minipage}{0.3\textwidth}
\centering
\includegraphics[width=0.7\textwidth]{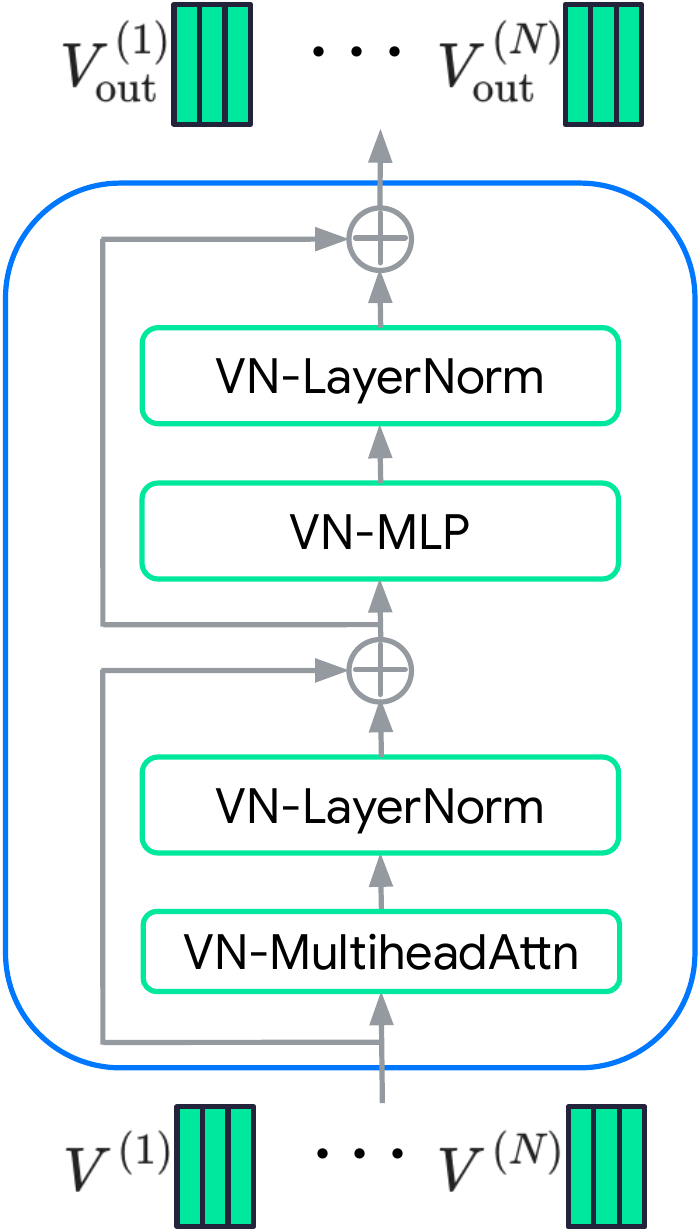}
\vspace{0.2in}
\captionof{figure}{VN-Transformer encoder block architecture. VN-MultiheadAttn and VN-LayerNorm are defined in \eqref{eq:f_MHA} and \eqref{eq:f_LayerNorm}, respectively. VN-MLP is a composition of VN-Linear, VN-BatchNorm, and VN-ReLU layers from \citet{deng2021vector}.}
\label{fig:VN_Transformer_Encoder_Block}
\end{minipage}\\

\begin{small}
\begin{align}
    \text{VN-}&\text{MultiHeadAttn}(Q,K,Z)\triangleq W^O\left[\text{VN-Attn}(W_h^Q Q,W_h^KK,W_h^ZZ)\right]_{h=1}^H\label{eq:f_MHA},
\end{align}
\end{small}
where  $W_h^Q,W_h^K\in\mathbb{R}^{P\times C}$, $W_h^Z\in\mathbb{R}^{P\times C'}$ are feature, key, and value projection matrices of the $h$-th head (respectively), and $W^O\in\mathbb{R}^{C'\times HP}$ is an output projection matrix \citep{vaswani2017attention}.\footnote{In practice, we set $H$ and $P$ such that $HP=C'$.}
$\text{VN-MultiHeadAttn}$ is also rotation-equivariant, and is the key building block of our rotation-equivariant VN-Transformer.

\new{We note that a similar idea was proposed in \cite{fuchs2020se3transformers} -- namely, they use inner products between equivariant representations (obtained from the TFN framework of \citet{thomas2018tensor}) to create a rotation-invariant attention matrix and a rotation-equivariant attention mechanism. Our attention mechanism can be thought of as the same treatment applied to the VN framework. For a more detailed comparison between this work and the proposal of \citet{fuchs2020se3transformers}, see Appendix \ref{sec:se3transformer_comparison}.}
\subsection{Rotation-equivariant layer normalization}
\begin{minipage}{0.6\textwidth}
\citet{deng2021vector} allude to a rotation-equivariant version of the well-known layer normalization operation \citep{ba2016layer}, but do not explicitly provide it -- we do so here for completeness (see Figure \ref{fig:layer_norm}):
\begin{small}
\begin{align}
    &\text{VN-LayerNorm}(V^{(n)}) \label{eq:f_LayerNorm}\triangleq\\ &\left[\frac{V^{(n,c)}}{||V^{(n,c)}||_2}\right]_{c=1}^C \odot \text{LayerNorm}\left(\left[||V^{(n,c)}||_2\right]_{c=1}^C\right)\mathds{1}_{1\times 3},\notag 
\end{align}
\end{small}
where $\odot$ is an elementwise product, $\text{LayerNorm}$: \begin{small}$\mathbb{R}^C\rightarrow\mathbb{R}^C$\end{small} is the layer normalization operation of \citet{ba2016layer}, and $\mathds{1}_{1\times 3}$ is a row-vector of ones.
\end{minipage}\hspace{0.03\textwidth}
\begin{minipage}{0.36\textwidth}
\vspace{-0.4in}
\includegraphics[width=\textwidth]{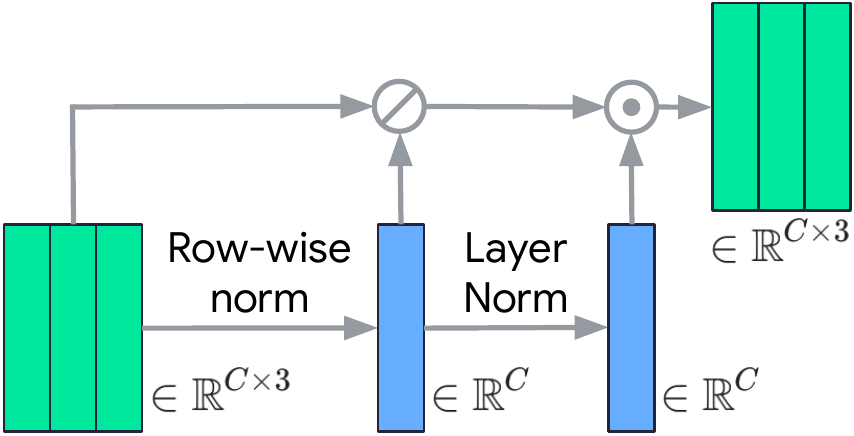}\vspace{0.1in}
\captionof{figure}{VN-LayerNorm: rotation-equivariant layer normalization. ``Layer Norm'' is the standard layer normalization operation of \citet{ba2016layer}. $\odot$ and $\oslash$ are row-wise multiplication and division.}
\label{fig:layer_norm}
\end{minipage}
\subsection{Encoder architecture}
Figure \ref{fig:VN_Transformer_Encoder_Block} details the architecture of our proposed rotation-equivariant VN-Transformer encoder.
The encoder is structurally identical to the original Transformer encoder of \citet{vaswani2017attention}, with each operation replaced by its rotation-equivariant VN analog.
\section{Non-spatial attributes}
\label{sec:attributedpoints}
\begin{minipage}{0.55\textwidth}
``Real-world'' point-clouds are typically augmented with crucial meta-data such as intensity \& elongation for Lidar point-clouds and  \& sensor type for multi-sensor point-clouds.
Handling such point-clouds while still satisfying equivariance/invariance w.r.t. spatial inputs would be useful for many applications.

We investigate two strategies (late fusion and early fusion) to handle non-spatial attributes while maintaining rotation equivariance/invariance w.r.t. spatial dimensions:

\textbf{Late fusion.}~In this approach, we propose to incorporate non-spatial attributes into the model at a later stage, where we have already processed the spatial inputs in a rotation-equivariant fashion -- our ``late fusion'' models for classification and trajectory prediction are shown in Figure \ref{fig:late_fusion}.

\textbf{Early fusion.}~Early fusion is a simple yet powerful way to process non-spatial attributes \citep{jaegle2021perceiver}. In this approach, we do not treat non-spatial attributes differently (see Figure \ref{fig:early_fusion}) -- we simply concatenate spatial \& non-spatial inputs before feeding them into the VN-Transformer.
The VN representations obtained are $C\times (3+d_A)$ matrices (instead of $C\times 3$).
\end{minipage}\hspace{0.02\textwidth}
\begin{minipage}{0.4\textwidth}
\ffigbox[\textwidth]{%
 \begin{subfloatrow}
  \ffigbox[0.45\textwidth]{
  \includegraphics[width=0.376\textwidth]{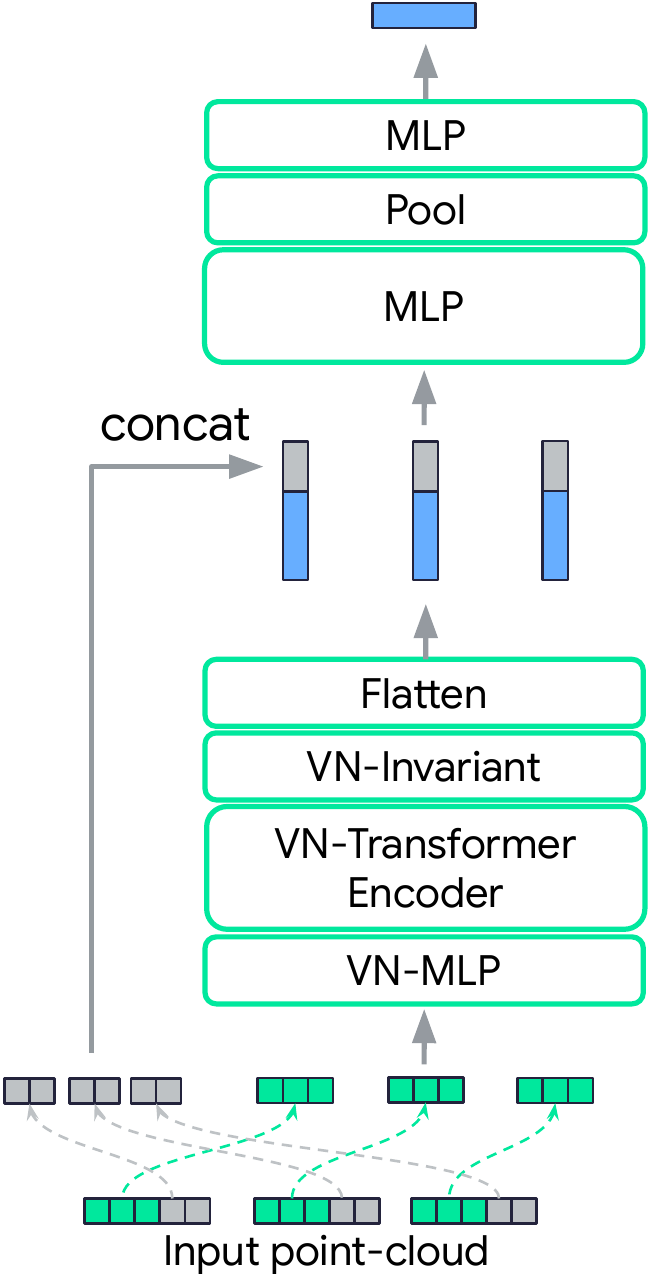}
  }{\caption{Rotation-invariant classification model.}
    \label{fig:vn_transformer_classifier_late_fusion}}
  \ffigbox[0.5\textwidth]{%
  \includegraphics[width=0.42\textwidth]{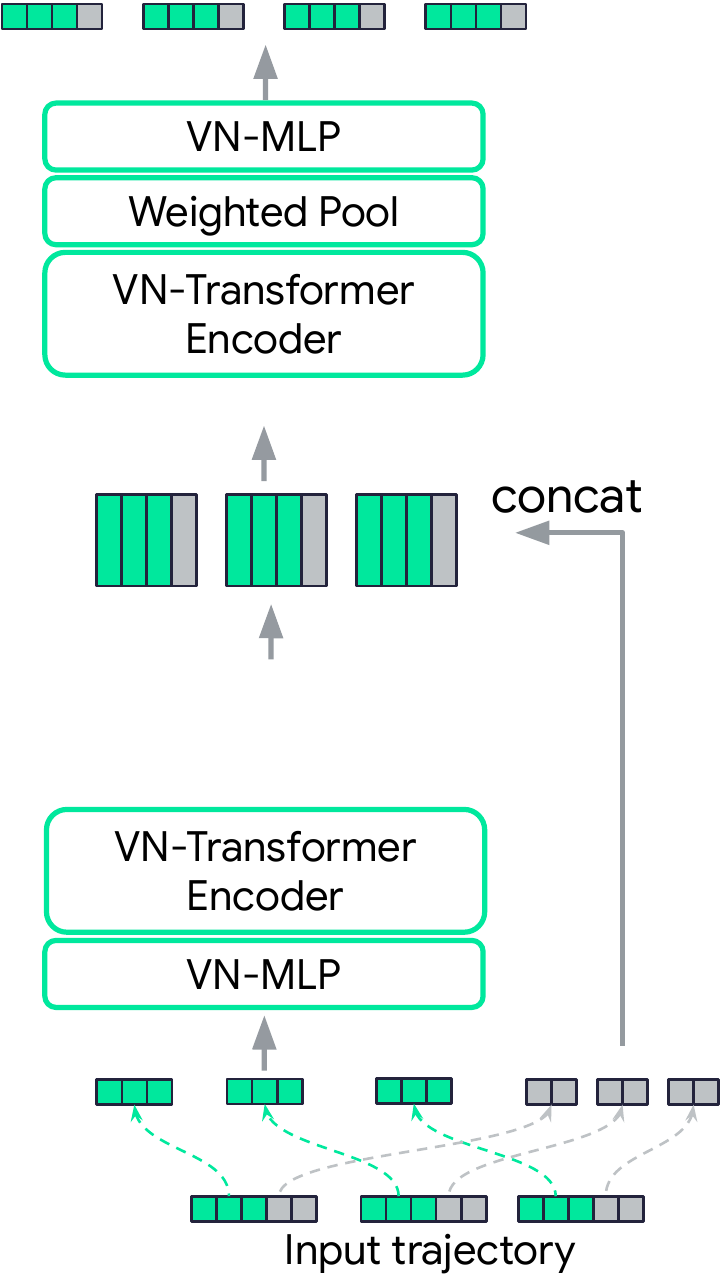}}{\caption{Rotation-equivariant trajectory forecasting model.}
    \label{fig:vn_transformer_trajectory_late_fusion}}
  \end{subfloatrow}%
 }{\caption{VN-Transformer (``late fusion'') models. \\Legend: (\raisebox{-0.8pt}{\crule[wgreen]{6pt}{6pt}}) SO(3)-equivariant features; (\raisebox{-0.8pt}{\crule[wblue]{6pt}{6pt}}) SO(3)-invariant features; (\raisebox{-0.8pt}{\crule[wgrey]{6pt}{6pt}}) Non-spatial features.}\label{fig:late_fusion}}
\end{minipage}
\section{Rotation-equivariant multi-scale feature aggregation}
\label{sec:latent}
\begin{minipage}{0.6\textwidth}
\citet{jaegle2021perceiver} recently proposed an attention-based architecture, PerceiverIO, which reduces the computational complexity of the vanilla Transformer by reducing the number of tokens (and their dimension) in the intermediate representations of the network.
They achieve this reduction by learning a set $Z\in\mathbb{R}^{M\times C'}$ of ``latent features,'' which they use to perform QKV attention with the original input tokens $X\in\mathbb{R}^{N\times C}$ (with $M<<N$ and $C'<<C$). Finally, they perform self-attention operations on the resulting $M\times C'$ array, leading to a $\mathcal{O}(M^2C')$ runtime instead of
\end{minipage}\hspace{0.02\textwidth}
\begin{minipage}{0.35\textwidth}
\vspace{-0.1in}
\includegraphics[width=\textwidth]{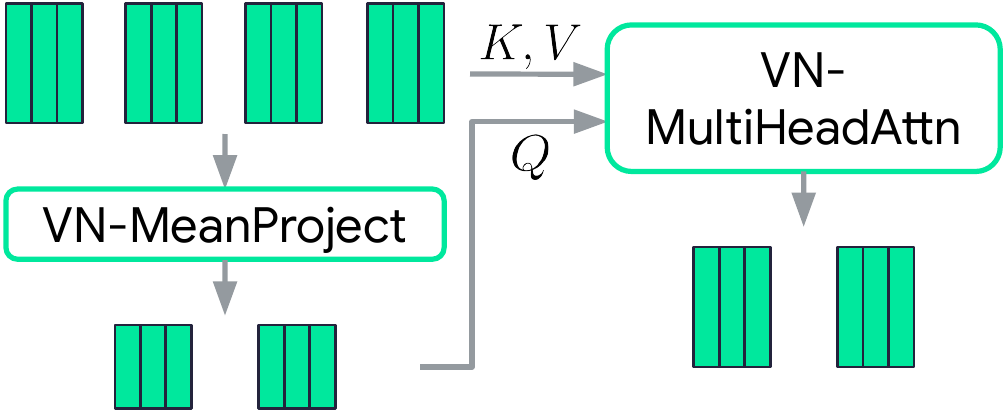}\vspace{0.15in}
\captionof{figure}{Rotation-equivariant latent features for Vector Neurons.}
\label{fig:latent_features}
\end{minipage}\vspace{0.02in}\\
$\mathcal{O}(N^2C)$ for each encoder self-attention operation, greatly improving time complexity during training and inference -- a boon for time-critical applications such as real-time motion forecasting. However, such learnable latent features would violate equivariance in our case, since these learnable features would have no information about the original input's orientation.
To remedy this, we instead propose to a learn a \textit{transformation} from the inputs to the latent features (where the number of latent features is much smaller than the number of original inputs). Specifically, we propose to use a mean projection function $\text{VN-MeanProject}(V)^{(m)} \triangleq W^{(m)}\left[\frac{1}{N}\sum_{n=1}^NV^{(n)}\right]$,
where $W\in\mathbb{R}^{M\times C'\times C}$ is a learnable tensor. VN-MeanProject is both rotation-equivariant and permutation-invariant.
We then perform VN-MultiHeadAttention between the resulting latent features and the original inputs $V$ to get a smaller set of VN representations. The architecture diagram for our proposed rotation-equivariant ``latent feature'' mechanism is shown in Figure \ref{fig:latent_features}.
\vspace{-0.1in}
\section{$\epsilon$-approximate equivariance}\label{sec:epsiloneqvar}
\vspace{-0.1in}
We noticed that distributed training on accelerated hardware is numerically unstable for points with small norms.
This is unique to VN models -- the VN-Linear layer does not include a bias vector, which leads to frequent underflow issues on distributed accelerated hardware. We found that introducing small and controllable additive biases fixes these issues -- we modify the VN-Linear layer by adding a bias with controllable norm:
\begin{small}
\begin{align}
\textup{VN-LinearWithBias}(V^{(n)};W,U,\epsilon) &\triangleq WV^{(n)} + \epsilon U\label{eq:linear_with_bias},~~~~U^{(c)} \triangleq B^{(c)}/||B^{(c)}||_2,
\end{align}
\end{small}
where $\epsilon \geq 0$ is a hyperparameter controlling the bias norm, and $B\in\mathbb{R}^{C'\times 3}$ is a learnable matrix. This leads to significant improvements in training stability and model quality.
In principle, VN-LinearWithBias is not equivariant, but its violation of equivariance can be bounded.

Work on equivariance by construction typically treats rotation equivariance as a binary idea -- a model is either equivariant, or it is not.
This can be relaxed by asking: how large is the violation of equivariance?
We quantify this with the \emph{equivariance violation} metric, defined by: \begin{align}\Delta(f,X,R) \triangleq ||f(XR)-f(X)R||_F.\end{align}
If $\Delta(f,X,R)\leq \epsilon$, we say $f$ is $\epsilon$-\textit{approximately equivariant}.

We bound the equivariance violation of $\textup{VN-LinearWithBias}(\cdot; W,U,\epsilon):\mathbb{R}^{C\times 3}\rightarrow \mathbb{R}^{C'\times 3}$ below:

\begin{proposition}
$\textup{VN-LinearWithBias}$ is $(2\epsilon\sqrt{C'})$-approximately equivariant (tight when $R=-I)$.
\end{proposition}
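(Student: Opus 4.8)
The plan is to exploit the fact that the only source of equivariance violation is the additive bias $\epsilon U$, since the linear part $WV^{(n)}$ is already exactly equivariant (as established for VN-Linear). So I expect the whole argument to reduce to bounding the Frobenius norm of $U - UR$, with the tricky part being not the inequality itself but the tightness claim.

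First I would expand both sides of the equivariance violation for $f = \textup{VN-LinearWithBias}(\cdot\,;W,U,\epsilon)$ on a single token $X = V^{(n)} \in \mathbb{R}^{C\times 3}$. Writing $f(X) = WX + \epsilon U$, I compute $f(XR) = WXR + \epsilon U$ and $f(X)R = WXR + \epsilon UR$. The $WXR$ terms cancel, leaving $f(XR) - f(X)R = \epsilon(U - UR)$, so that $\Delta(f,X,R) = \epsilon\,\|U - UR\|_F$ — an expression that notably does not depend on $X$ at all.

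Next I would bound $\|U - UR\|_F$ by the triangle inequality, $\|U - UR\|_F \le \|U\|_F + \|UR\|_F$, and use that right-multiplication by $R \in \textup{SO}(3)$ preserves the Frobenius norm. This last fact is immediate from Proposition \ref{prop:frobenius_invariance} taken with $n = n'$, giving $\|UR\|_F^2 = \langle UR, UR\rangle_F = \langle U, U\rangle_F = \|U\|_F^2$ (equivalently, $RR^\intercal = I$). Hence $\|U - UR\|_F \le 2\|U\|_F$. It then remains to evaluate $\|U\|_F$: by construction each row $U^{(c)} = B^{(c)}/\|B^{(c)}\|_2$ is a unit vector, so $\|U\|_F^2 = \sum_{c=1}^{C'} \|U^{(c)}\|_2^2 = C'$ and $\|U\|_F = \sqrt{C'}$. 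Combining yields $\Delta(f,X,R) \le 2\epsilon\sqrt{C'}$, which is the claimed $(2\epsilon\sqrt{C'})$-approximate equivariance.

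The main obstacle is the tightness statement. Equality in the triangle inequality holds exactly when $U$ and $-UR$ are aligned, i.e. when $UR = -U$; substituting $R = -I$ gives $UR = -U$ and hence $\Delta = \epsilon\,\|2U\|_F = 2\epsilon\sqrt{C'}$, matching the bound. The subtlety I would be careful to flag is that $-I \notin \textup{SO}(3)$, since $\det(-I) = -1$. To exhibit tightness honestly within $\textup{SO}(3)$ one should instead take $R$ to be a rotation by angle $\pi$ about an axis orthogonal to the span of the rows of $U$; this also produces $UR = -U$ and therefore equality, so the bound is genuinely attained, with $R = -I$ serving only as the clean worst-case representative of this configuration.
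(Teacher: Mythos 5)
Your proof of the main inequality is correct and is essentially the paper's argument in lightly different clothing: the paper expands $\|U^{(c)}-U^{(c)}R\|_2^2$ row by row and bounds the cross term $-2U^{(c)}R^\intercal U^{(c)\intercal}$ via Cauchy--Schwarz, which is exactly your triangle-inequality-plus-norm-invariance step aggregated over rows; both arguments reduce to $\Delta(f,X,R)=\epsilon\|U-UR\|_F\leq 2\epsilon\|U\|_F=2\epsilon\sqrt{C'}$.

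Where you genuinely depart from the paper is the tightness claim, and there you are half right. You are correct to flag that $-I_{3\times 3}\notin\textup{SO}(3)$ (its determinant is $-1$); the paper's proof never actually verifies tightness, and substituting $R=-I$ gives equality only for an improper rotation. However, your repair has a gap of its own: a rotation by $\pi$ about ``an axis orthogonal to the span of the rows of $U$'' exists only when that span has dimension at most $2$. Equality in your triangle inequality forces $U^{(c)}R=-U^{(c)}$ for every row, i.e.\ every row must lie in the $(-1)$-eigenspace of the rotation, which for a proper rotation is at most two-dimensional. Since $U$ is an arbitrary learnable matrix with $C'$ rows (typically $C'\geq 3$), generically its rows span all of $\mathbb{R}^3$ and no such rotation exists. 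Indeed, writing $\|U-UR\|_F^2=2C'-2\Tr(R^\intercal U^\intercal U)$ and minimizing the trace over $\textup{SO}(3)$ shows the true supremum is $2\epsilon\sqrt{C'-\lambda_{\min}(U^\intercal U)}$, strictly below $2\epsilon\sqrt{C'}$ whenever $U$ has rank $3$. The honest statement is therefore: the constant $2\epsilon\sqrt{C'}$ cannot be improved uniformly over admissible $U$ (it is attained, via your $\pi$-rotation, precisely when the rows of $U$ lie in a common plane), but for a fixed full-rank $U$ the bound is not attained by any genuine rotation.
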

A natural next question is: how do such equivariance violations propagate through a deep model?
\begin{proposition}\label{prop:composition}
Suppose we have $K$ functions $f_k:\mathcal{X}_k\rightarrow\mathcal{X}_{k+1}$ (with $\mathcal{X}_k\subset\mathbb{R}^{C_k\times3},\mathcal{X}_{k+1}\subset\mathbb{R}^{C_{k+1}\times3}$, $~k\in\{1,\ldots,K\}$) s.t. 
\begin{enumerate} \item $f_k$ is $\epsilon_k$-approximately equivariant for all $k\in\{1,\ldots,K\}$
\item $f_k$ is $L_k$-Lipschitz (w.r.t. $||\cdot||_F$) for all $k\in\{2,\ldots,K\}$.
\end{enumerate}
Then, the composition $f_K\circ \cdots \circ f_1$ is $\epsilon_{1\ldots K}$-approximately equivariant, where
\begin{small}
\begin{align}
    \epsilon_{1\ldots K} \triangleq L_K(\cdots (L_3(L_2\epsilon_1 + \epsilon_2)+\epsilon_3)+\cdots)+\epsilon_K.
\end{align}
\end{small}
\end{proposition}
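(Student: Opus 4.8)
The plan is to prove the claim by induction on the number $K$ of composed maps, peeling off the outermost function at each step. Writing $g_{K-1}\triangleq f_{K-1}\circ\cdots\circ f_1$, the induction hypothesis is that $g_{K-1}$ is $\epsilon_{1\ldots K-1}$-approximately equivariant, i.e. $||g_{K-1}(XR)-g_{K-1}(X)R||_F\le\epsilon_{1\ldots K-1}$ for all $X$ and all $R\in\textup{SO}(3)$. The base case $K=1$ is immediate, since then the composition is just $f_1$ and $\epsilon_{1\ldots 1}=\epsilon_1$ by definition, so the hypothesis $f_1$ is $\epsilon_1$-approximately equivariant is exactly what is claimed.

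The heart of the inductive step is a single triangle-inequality split with a well-chosen intermediate term. Setting $Y\triangleq g_{K-1}(X)$ and $Y'\triangleq g_{K-1}(XR)$, I would insert $f_K(YR)$ and bound
\begin{align}
\Delta(f_K\circ g_{K-1},X,R) &= ||f_K(Y')-f_K(Y)R||_F \notag\\
&\le ||f_K(Y')-f_K(YR)||_F + ||f_K(YR)-f_K(Y)R||_F. \notag
\end{align}
The second term is exactly $\Delta(f_K,Y,R)\le\epsilon_K$ by the $\epsilon_K$-approximate equivariance of $f_K$. The first term is controlled by the $L_K$-Lipschitz property of $f_K$ together with the induction hypothesis, giving $||f_K(Y')-f_K(YR)||_F\le L_K\,||Y'-YR||_F\le L_K\,\epsilon_{1\ldots K-1}$. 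Adding the two bounds yields $\Delta(f_K\circ g_{K-1},X,R)\le L_K\epsilon_{1\ldots K-1}+\epsilon_K$, and since the recursion defining the target constant satisfies precisely $\epsilon_{1\ldots K}=L_K\epsilon_{1\ldots K-1}+\epsilon_K$, the induction closes and the telescoped formula in the statement follows.

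Two points deserve care rather than constituting genuine obstacles. First, the argument invokes the Lipschitz constant of only the \emph{outer} map at each step, so $f_1$ is never required to be Lipschitz, which is consistent with the hypotheses asking only $f_2,\ldots,f_K$ to be Lipschitz. Second -- and this is the one place I would be careful -- the intermediate term $f_K(YR)$ must be well-defined, which requires $YR\in\mathcal{X}_K$; I would therefore assume (as is implicit in the setup, with each $\mathcal{X}_k\subset\mathbb{R}^{C_k\times 3}$ carrying the natural $\textup{SO}(3)$ action) that every domain $\mathcal{X}_k$ is closed under rotation. Granting this, the only real content is the choice of the intermediate term $f_K(YR)$, which splits the error into an equivariance violation of $f_K$ and a Lipschitz-amplified equivariance violation of the inner composition; everything else is bookkeeping of the telescoping constant.
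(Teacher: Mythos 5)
Your proof is correct and takes essentially the same approach as the paper: the paper's Lemma \ref{lemma:comp} performs exactly your triangle-inequality split with the intermediate term $f_K(g_{K-1}(X)R)$, bounding one piece by $\epsilon_K$ via approximate equivariance of the outer map and the other by $L_K$ times the inner violation, and the proposition then unrolls this inequality $K$ times, which is your induction written iteratively. Your remark that each $\mathcal{X}_k$ must be closed under the rotation action (so the intermediate term is well-defined) is a hypothesis the paper leaves implicit, but it does not change the argument.
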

Intuitively, each layer $f_k$ ``stretches'' the equivariance violation error of the previous layers by its Lipschitz constant $L_k$, and adds its own violation $\epsilon_k$ to the total error.
\section{Experiments}\label{sec:experiments}
\subsection{Rotation-invariant classification}
\begin{minipage}{0.43\textwidth}
Figure \ref{fig:vn_transformer_classifier_early_fusion} shows our proposed VN-Transformer architecture for classification. It consists of rotation-equivariant operations (VN-MLP and VN-Transformer Encoder blocks), followed by an invariant operation (VN-Invariant), and finally standard Flatten/Pool/MLP operations to get class predictions. The resulting logits/class predictions are rotation-invariant. 
\end{minipage}\hspace{0.02\textwidth}
\begin{minipage}{0.53\textwidth}
\vspace{-0.4in}
\centering
 \captionof{table}{ModelNet40 test accuracy. Top block shows \textup{SO(3)}-invariant baselines taken from \citet{deng2021vector}, included here for convenience.\label{tab:modelnet40}}
\vspace{0.1in}
\resizebox{\textwidth}{!}{
\begin{tabular}{lcr}
        \toprule
        \textbf{Model} &  \textbf{Acc.} & \textbf{\# Params} \\
        \midrule
        TFN \citep{thomas2018tensor} & 88.5\% & -- \\
        RI-Conv \citep{zhang2019riconv} & 86.5\% & --\\
        GC-Conv \citep{zhang2020gcconv} & 89.0\% & -- \\
        \midrule
        VN-PointNet \citep{deng2021vector} & 77.2\% & 2.20M \\
        VN-DGCNN \citep{deng2021vector} & 90.0\% & 2.00M \\
        VN-Transformer (ours) & 90.8\% & 0.04M \\
        \bottomrule
\end{tabular}
}
\vspace{0.1in}
\end{minipage}\vspace{0.1in}
We evaluate our VN-Transformer classifier on the commonly used ModelNet40 dataset \citep{modelnet}, a 40-class point-cloud classification problem. In Table \ref{tab:modelnet40}, we compare our model with recent rotation-invariant models.
The VN-Transformer outperforms the baseline VN models with orders of magnitude fewer parameters. Furthermore, we dispense with the computationally expensive edge-convolution used as a preprocessing step in the models of \citet{deng2021vector} and find that the VN-Transformer's performance is relatively unaffected (see Figure \ref{fig:vn_transformer_edge_conv}).\\
\begin{figure}[!t]
\begin{floatrow}
\ffigbox{%
\includegraphics[width=0.45\textwidth]{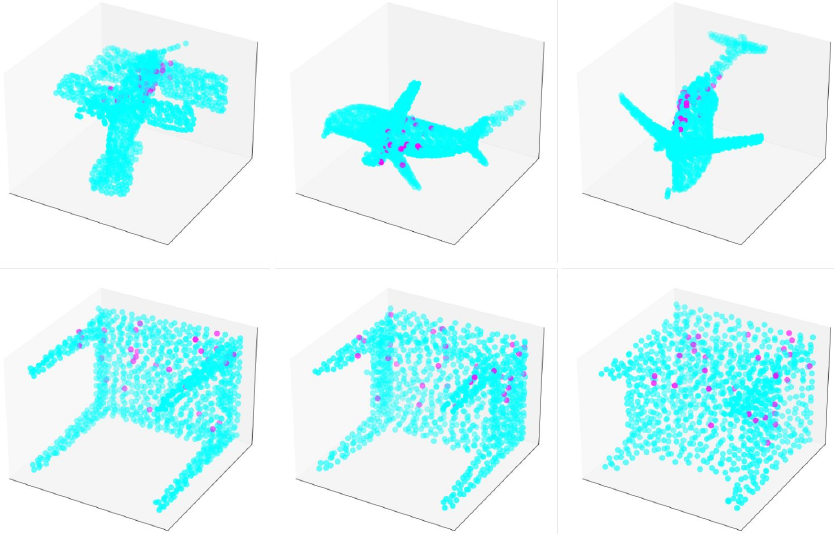}
}{%
  \caption{
  Example point-clouds from the ModelNet40 Polka-dot dataset. Cyan points correspond to $a_i=0$, and pink points correspond to $a_i=1$. Note that the ``airplane'' class has a narrower polka-dot radius than the ``table'' class, since we make the polka-dot radius dependent on the object class.}
\label{fig:modelnet40_polka_dot}
}
\ffigbox{%
\includegraphics[width=0.45\textwidth]{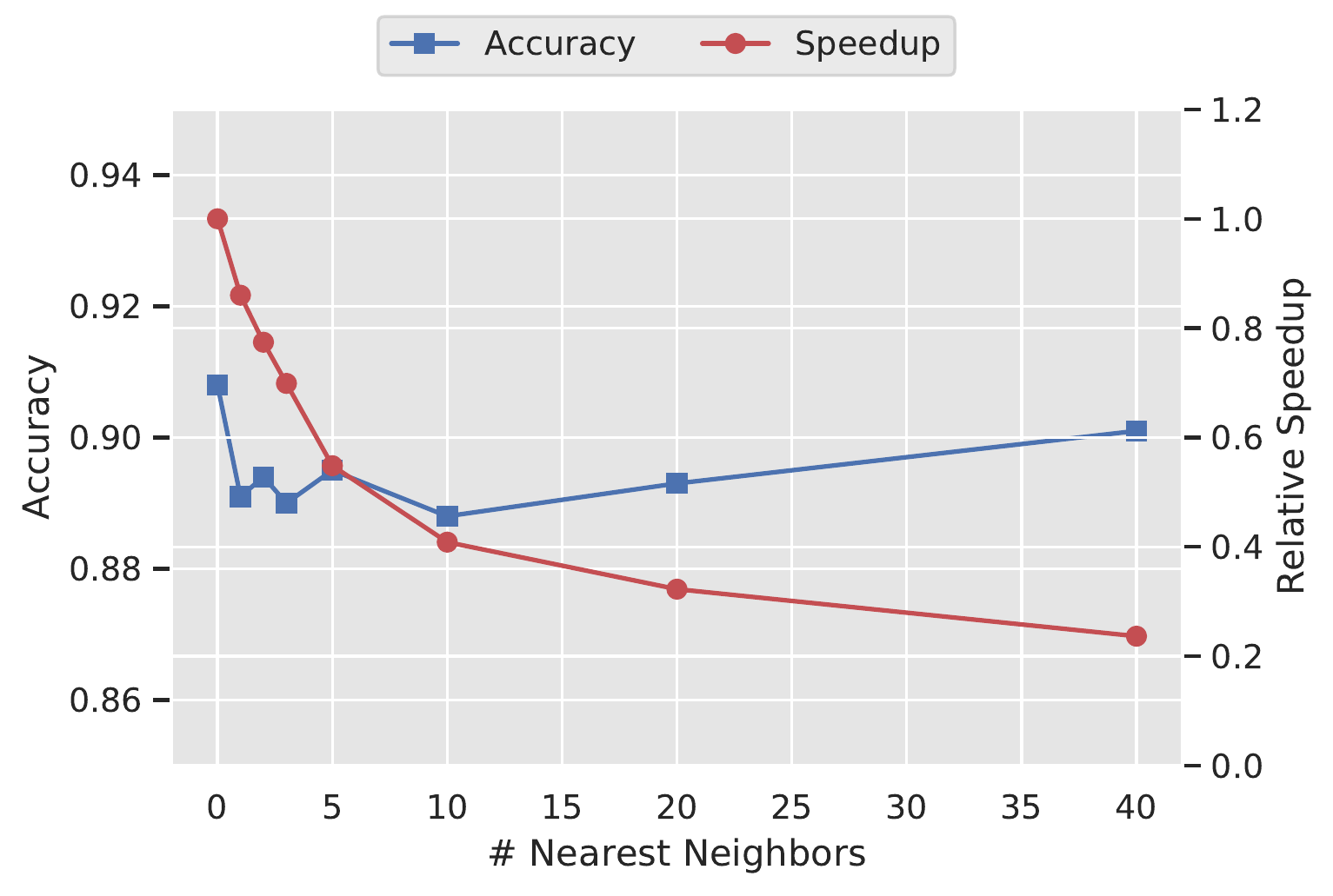}
}{%
  \caption{VN-Transformer ModelNet40 accuracy and relative training speed vs. number of nearest neighbors used in edge-convolution preprocessing.  Speed is computed relative to zero neighbors (\textit{i.e.}, no edge-convolution). Edge-convolution slows training speed by $\sim$5x and has little effect on model performance.}
    \label{fig:vn_transformer_edge_conv}
}
\end{floatrow}
\end{figure}
\begin{minipage}{0.45\textwidth}
\centering
\includegraphics[width=\textwidth]{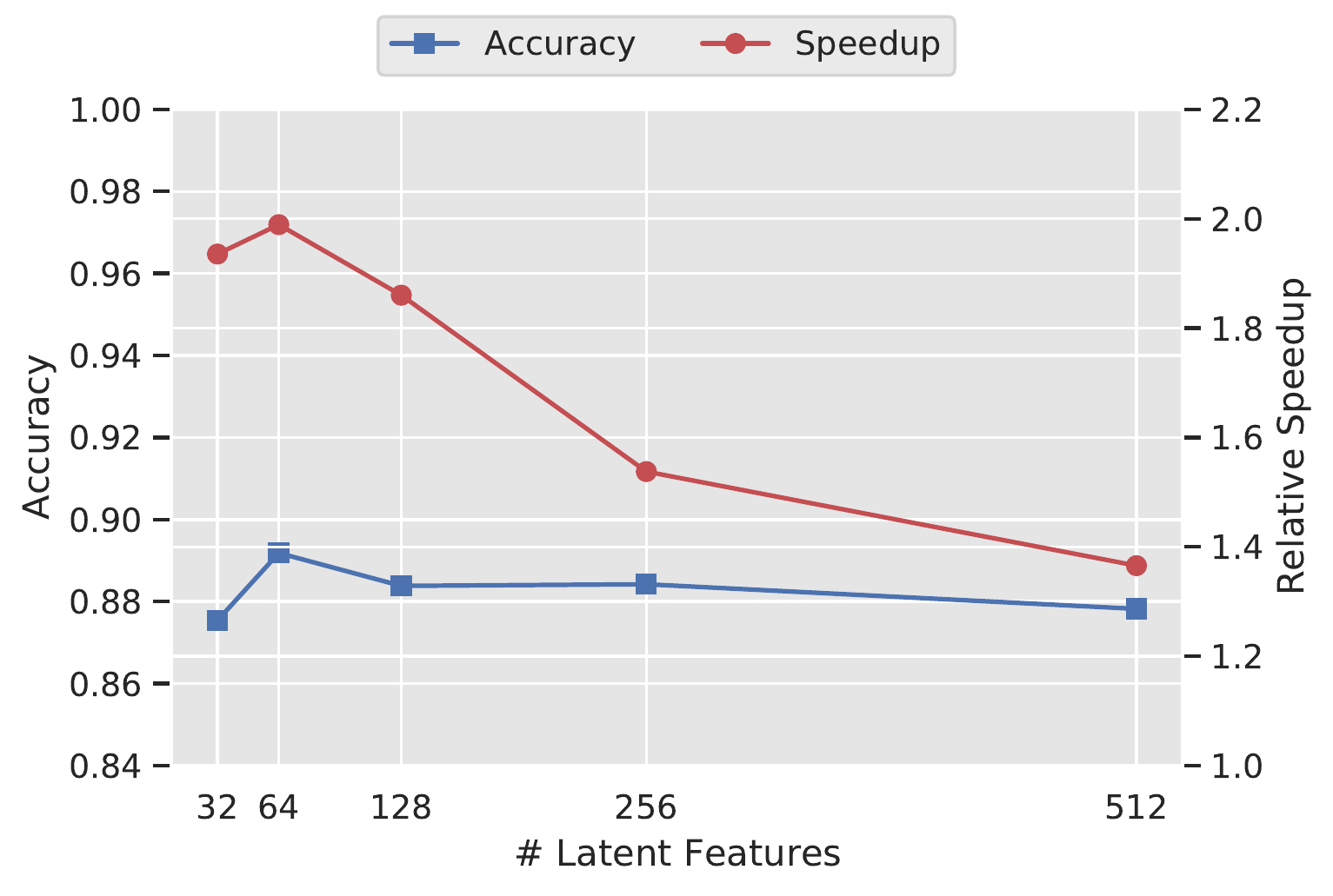}\vspace{0.1in}
\captionof{figure}{Test set accuracy on ModelNet40 vs. number of latent features (described in Figure \ref{fig:latent_features}). Latent features provide a $\sim$2x speedup with minimal acc. degradation (vs. row 3 of Table \ref{tab:modelnet40_polka_dot}).\label{fig:results_latent_features}}
\end{minipage}\hspace{0.05\textwidth}
\begin{minipage}{0.45\textwidth}
\vspace{0.75in}
\centering
 \captionof{table}{Test set accuracy on ModelNet40 ``Polka-Dot'' dataset. Top block shows ModelNet40 results (\textit{i.e.}, only spatial inputs). Bottom block shows ModelNet40 Polka-dot results. $\mathlarger{\epsilon}$ is the bias norm in VNLinearWithBias of eq. \eqref{eq:linear_with_bias}.\label{tab:modelnet40_polka_dot}}
\vspace{0.1in}
\resizebox{\textwidth}{!}{%
    \begin{tabular}{lcllr}
    \toprule
    \textbf{Model} & $\mathlarger{\mathlarger{\mathlarger{\epsilon}}}$ & \textbf{Fusion} &\textbf{Features} & \textbf{Acc.}\\
        \midrule
        VN-PointNet & 0 & --&$[x, y, z]$ &77.2\%  \\
        VN-Transformer&  $0$ & -- & $[x, y, z]$ & 88.5\%  \\
       VN-Transformer&  $10^{-6}$ & -- & $[x, y, z]$ & 90.8\%  \\ \hline
        VN-PointNet  & 0 &Early& $[x, y, z, a]$ & 82.0\% \\
        VN-Transformer & 0 & Early& $[x, y, z, a]$ & 91.1\% \\   
        VN-Transformer&  $10^{-6}$ & Early & $[x, y, z, a]$ & \textbf{95.4}\% \\
        VN-Transformer &  $10^{-6}$ &  Late & $[x, y, z, a]$ & 91.0\% \\
        \bottomrule
    \end{tabular}
}
\end{minipage}
\subsection{Classification with non-spatial attributes}
To evaluate our model's ability to handle non-spatial attributes, we design a modified version of the ModelNet40 dataset, called ModelNet40 ``Polka-dot,'' which we construct (from ModelNet40) as follows: to each point $[x_i,y_i,z_i]$, we append $a_i\in\{0,1\}$. Within a radius $r$ of a randomly chosen center, we randomly select 30 points and set $a_i=1$. We make the ``polka-dot'' radius $r$ depend on the label $y\in\{1,\ldots,40\}$ via $r(y) \triangleq r_\text{lo} + \frac{1}{39}(y-1)(r_\text{hi}-r_\text{lo}),$ where $r_\text{lo}=0.3, r_\text{hi}=1$.
Figure \ref{fig:modelnet40_polka_dot} shows example point-clouds from the ModelNet40 Polka-dot dataset.
By generating ModelNet40 Polka-dot in this way, we directly embed class information into the non-spatial attributes.
In order to perform well on this task, models need to effectively fuse spatial and non-spatial information (there is no useful information in the non-spatial attributes alone since all point-clouds have $\sum_{i=1}^Na_i = 30$). The results on ModelNet40 Polka-dot are shown in Table \ref{tab:modelnet40_polka_dot}. Both VN-PointNet and VN-Transformer benefit significantly from the binary polka-dots, suggesting they are able to effectively combine spatial and non-spatial information.

\subsection{Latent features}
Figure \ref{fig:results_latent_features} shows our results on ModelNet40 when we reduce the number of tokens from 1024 (the number of points in the original point-cloud) to 32 using the latent feature mechanism presented in Figure \ref{fig:latent_features}.
Using latent features provides a $\sim$2x latency improvement (in training steps/sec) with minimal ($\sim$1.7\%) accuracy degradation (compared with row 3 of Table \ref{tab:modelnet40_polka_dot}). This suggests a real benefit of the latent feature mechanism in time-sensitive applications such as autonomous driving.
\subsection{Rotation-equivariant motion forecasting}
\begin{minipage}{0.5\textwidth}
Figure \ref{fig:vn_transformer_trajectory_early_fusion} shows our proposed rotation-equivariant architecture for motion forecasting. In motion forecasting the goal is to predict the $[x, y, z]$ locations of an agent for a sequence of future timesteps, given as input the past locations of the agent. We evaluate the model on a simplified version of the Waymo Open Motion Dataset (WOMD; \citeauthor{Ettinger_2021_ICCV}, \citeyear{Ettinger_2021_ICCV}):
\begin{itemize}[noitemsep,topsep=0pt,leftmargin=*]
    \item We select 4904 trajectories (3915 for training, 979 for testing).
    \item Each trajectory consists of 91 $[x,y,z]$ points for a single vehicle sampled at 5 Hz.
    \item We use the first 11 points (the past) as input and we predict the remaining 80 points (the future).
\end{itemize}
\end{minipage}\hspace{0.02\textwidth}
\begin{minipage}{0.45\textwidth}
\centering
\captionof{table}{Average Distance Error on WOMD. $\downarrow$ Lower is better. \protect\cca{z} $=$ random $z$-axis rotations used as data augmentation at training time. $\epsilon$ is the bias in the VNLinearWithBias layers (equation \eqref{eq:linear_with_bias}). The bottom block uses the speed $a$ as an added input feature (via early fusion). \label{tab:waymo_open_motion}}\vspace{0.1in}
\resizebox{0.95\textwidth}{!}{
    \begin{tabular}{lclc}
        \toprule
        \hspace{-0.5em}\textbf{Model} & $\epsilon$ & \textbf{Features} &\textbf{ADE ($\downarrow$)}\hspace{-0.5em} \\
        \midrule
        \hspace{-0.5em}Transformer & -- & $[x,y,z]$& 5.01\hspace{-0.5em} \\
        \hspace{-0.5em}Transformer + \cca{z} & -- & $[x,y,z]$ & 4.51\hspace{-0.5em} \\
        \hspace{-0.5em}VN-Transformer & 0 & $[x,y,z]$& 4.91\hspace{-0.5em} \\
        \hspace{-0.5em}VN-Transformer & $10^{-6}$ & $[x,y,z]$ & \textbf{3.95}\hspace{-0.5em} \\
        \midrule
        \hspace{-0.5em}VN-Transformer & 0 & $[x,y,z,a]$ & 5.01\hspace{-0.5em} \\
        \hspace{-0.5em}VN-Transformer & $10^{-6}$  & $[x,y,z,a]$ & \textbf{3.67}\hspace{-0.5em} \\
        \bottomrule
    \end{tabular}
}
\end{minipage}\\

\begin{minipage}{0.5\textwidth}
We evaluate the quality of our trajectory forecasting models using the Average Distance Error (ADE):
$
    \text{ADE}(Y_i,\hat{Y}_i) \triangleq \frac{1}{T}\sum_{t=1}^T ||Y_i^{(t)}-\hat{Y}_i^{(t)}||_2,
$
where $T$ is the number of time-steps in the output trajectory and $Y_i,\hat{Y}_i\in\mathbb{R}^{T\times3}$ are the ground-truth trajectory and the predicted trajectory, respectively. Results are shown in Table \ref{tab:waymo_open_motion}. Adding training-time random rotations about the $z$-axis yields improves the performance of the vanilla Transformer, and the VN-Transformer outperforms the vanilla Transformer (without the need for train-time rotation augmentations, thanks to equivariance). Figure \ref{fig:waymo_open_motion} shows example predictions on WOMD. Equivariance violations of the vanilla Transformer models (columns (a) and (b)) are clearly demonstrated here, in contrast with the equivariant VN-Transformer (column (c)).
\end{minipage}\hspace{0.02\textwidth}
\begin{minipage}{0.45\textwidth}
\centering
\includegraphics[width=0.9\textwidth]{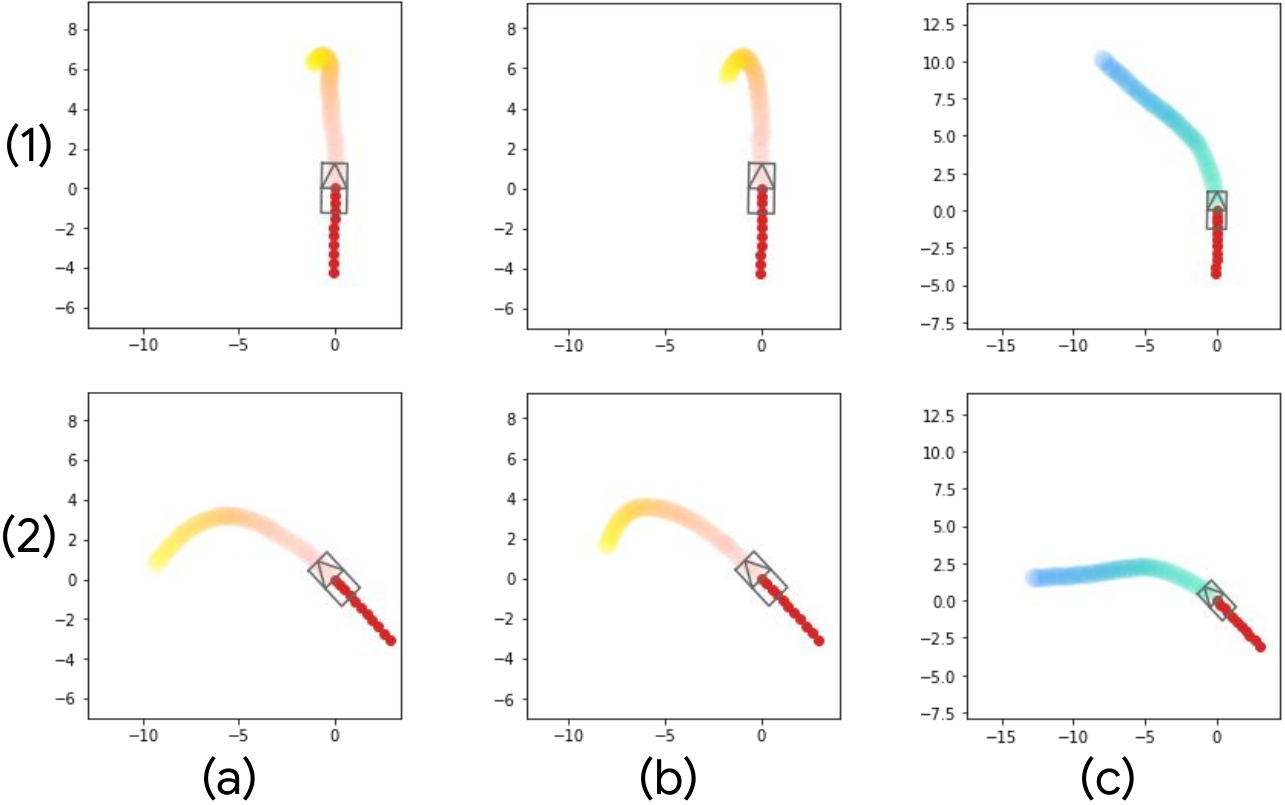}
\vspace{0.1in}
  \captionof{figure}{Example predictions on WOMD. Legend: Rectangle $=$ current car position, Red points $=$ input trajectory, Colored streaks $=$ predicted trajectories. Columns are different trajectory models, with (a) Transformer, (b) Transformer + \protect\cca{z} augmentations, and (c) VN-Transformer. Row (1) is one example in the dataset. Row (2) is a $45^{\circ}$ rotation of the input points in Row (1).\label{fig:waymo_open_motion}}
\end{minipage}
\section{Conclusion}\label{conclusion}
In this paper, we introduced the VN-Transformer, a rotation-equivariant Transformer model based on the Vector Neurons framework. VN-Transformer is a significant step towards building powerful, modular, and easy-to-use models that have appealing equivariance properties for point-cloud data. 

Limitations of our work include: $(i)$ similar to previous work \citep{deng2021vector,qi2017pointnet} we assume input data has been mean-centered. This is sensitive to outliers, and prevents us from making single-pass predictions for multi-object problems (we have to independently mean-center each agent first). \new{Similarly, we have not addressed other types of invariance/equivariance (\textit{e.g.}, scale invariance) in this work}; $(ii)$ Proposition \ref{prop:composition} shows an error bound on the total equivariance violation of the network with $L_k$-Lipschitz layers. We know the Lipschitz constants of VN-Linear and VN-LinearWithBias (see Appendix \ref{sec:proofs}), but we have not yet determined them for other layers (\textit{e.g.}, VN-ReLU, VN-MultiHeadAttn). We will address these gaps in future work, and we will also leverage VN-Transformers to obtain state-of-the-art performance on a number of key benchmarks such as the full Waymo Open Motion Dataset and ScanObjectNN.





\bibliography{refs}
\bibliographystyle{tmlr}

\newpage
\appendix

\section{Detailed comparison with SE(3)-Transformer \citep{fuchs2020se3transformers}}\label{sec:se3transformer_comparison}
\subsection{Attention computation}\label{sec:attn_computation}
There is a rich literature on equivariant models using steerable kernels, and the SE(3)-Transformer is the closest development in this field to our work. Here, we make a detailed comparison between our work and the SE(3)-Transformer (and related steerable kernel-based models). For simplicity, we will compare the VN-Transformer with only spatial features vs. the SE(3)-Transformer with only type-1 features (i.e., spatial features).

\textbf{The key difference is in \textit{the way the weight matrices are defined and how they interact with the input}}. 

Specifically, \citeauthor{fuchs2020se3transformers} use $3\times3$ weight matrices  $W_Q^{11},W_K^{11},W_V^{11}\in\mathbb{R}^{3\times 3}$ that act on the \textit{spatial/representation} dimension of the input points (\textit{e.g.}, via $f_i^1W_Q^{11}$ where $f_i^1\in\mathbb{R}^{1\times3}$).\footnote{$W_Q^{\ell\ell'},W_K^{\ell\ell'},W_V^{\ell\ell'}\in\mathbb{R}^{(2\ell+1)\times (2\ell'+1)}$ in the general case, where $\ell,\ell'\in\{0,1,2\}$ are feature types}
As a result, in order to guarantee equivariance they need to design these matrices $W_Q^{11},W_K^{11},W_V^{11}$ such that they each commute with a rotation operation (in general $(f_i^1R)W_Q^{11}\neq (f_i^1W_Q^{11})R$ -- this depends on the choice of $W_Q^{11}$), hence the need for the machinery of Clebsch-Gordan coefficients, spherical harmonics, and radial neural nets to construct the weights.

In contrast, in our proposed attention mechanism, the matrices $W^Q,W^K,W^Z\in\mathbb{R}^{C'\times C}$ act on the \textit{channel} dimension of the input (\textit{e.g}., via $W^QV^{(n)}$, where $V^{(n)}\in\mathbb{R}^{C\times 3}$) and not the spatial dimension.
As a result, the operations $W^QV^{(n)},W^KV^{(n)},W^ZV^{(n)}$ are equivariant \textit{no matter the choice of $W^Q,W^K,W^Z$}, since $W(V^{(n)}R) = (WV^{(n)})R$. This results in a \textit{significantly simpler} construction of rotation-equivariant attention that is $(i)$ accessible to a wider audience (\textit{i.e.}, it does not require an understanding of group theory, representation theory, spherical harmonics, Clebsch-Gordan coefficients, etc.) and $(ii)$ much easier to implement.

For a side-by-side comparison of both attention query computations, see Table \ref{tab:se3transformer_comparison} above.

\begin{table}[t]
    \centering
    \small
    \begin{tabular}{p{4cm}p{5.4cm}p{5cm}}
    \toprule
       & \textbf{VN-Transformer (ours)}  & \textbf{SE(3)-Transformer (with $\ell=1$) (Fuchs et al., 2020)}  \\
      \midrule
      \textbf{Weight shape} & $W^Q\in\mathbb{R}^{C'\times C}$ ($C,C'$= \# of channels) & $W_Q^{11}\in\mathbb{R}^{3\times 3}$ ($\in\mathbb{R}^{(2\ell'+1)\times(2\ell+1)}$ in the general case) \\[0.5cm]
      \textbf{Action on input} & Left-multiply: $W^QV^{(n)} \left(V^{(n)}\in\mathbb{R}^{C\times 3}\right)$ & Right-multiply: $f_i^1W_Q^{11} \left(f_i^{1}\in\mathbb{R}^{1\times 3}\right)$\\[0.5cm]
      \textbf{Requirement for equivariance} & None, $W^Q\in\mathbb{R}^{C'\times C}$ is arbitrary & $W_Q^{11} = \sum_{J=0}^2\sum_{m=-J}^J\varphi_J^{11}(||x_i||)Y_{Jm}(x_i/||x_i||)Q_{Jm}^{11}$, ($\varphi$= radial neural net., $Y_{Jm}$= spherical harmonics, $Q_{Jm}^{11}$= Clebsch-Gordan coefficients)\\
      \bottomrule
    \end{tabular}
    \caption{Attention query computation in VN-Transformer vs. SE(3)-Transformer. $W^Q\in\mathbb{R}^{C'\times C}$ is the VN-Transformer weight matrix used to compute the query ($C$ and $C'$ are the number of input and query channels, respectively). \vspace{-5mm}}
    \label{tab:se3transformer_comparison}
\end{table}

\subsection{VN-Linear vs. SE(3)-Transformer ``self-interaction''}
There is a relationship between the VN-Linear operation of \citet{deng2021vector}, and the ``linear self-interaction'' layers of \citet{fuchs2020se3transformers}.
Comparing equation $(12)$ of \citet{fuchs2020se3transformers}, repeated here for convenience:
\begin{align}
    \mathbf{f}_{\text{out},i,c'}^\ell = \sum_c w_{c'c}^{\ell\ell}\mathbf{f}_{\text{in},i,c}^\ell,
\end{align}
with the VN-Linear operation of \citet{deng2021vector}:
\begin{align}
V^{(n)}_\text{out} = WV^{(n)}, ~~ W \in \mathbb{R}^{C'\times C}, V^{(n)}\in\mathbb{R}^{C\times 3},
\end{align}
we see that these operations are identical. 

However, our proposed VN-MultiHeadAttention is different and significantly simpler than the attention mechanism of the SE(3)-Transformer (see Section \ref{sec:attn_computation}), as it relies only on $(i)$ the rotation-invariant Frobenius inner product and $(ii)$ straightforward multiplication by an arbitary weight matrix to compute the keys, queries, and values (equivalent to VN-Linear/linear self-interaction). In that sense, it is closer in spirit to the original Transformer of \citeauthor{vaswani2017attention} (replacing vector inner product with Frobenius inner product). Further, as explained previously, it does not require the special construction of weight matrices using Clebsch-Gordan coefficients, spherical harmonics, and radial neural networks as in the SE(3)-Transformer.
\section{Experimental details}\label{sec:hyperparams}
\subsection{Datasets}

\paragraph{ModelNet40} The ModelNet40 dataset \citep{modelnet} is publicly available at \href{https://modelnet.cs.princeton.edu}{\texttt{https://modelnet.cs.princeton.edu}}, with the following comment under ``Copyright'': 

\textit{``All CAD models are downloaded from the Internet and the original authors hold the copyright of the CAD models. The label of the data was obtained by us via Amazon Mechanical Turk service and it is provided freely. This dataset is provided for the convenience of academic research only.''}

\paragraph{Waymo Open Motion Dataset}

The Waymo Open Motion Dataset \citep{Ettinger_2021_ICCV} is publicly available at \href{https://waymo.com/open/data/motion/}{\texttt{https://waymo.com/open/data/motion/}} under a non-commercial use license agreement. Full license details can be found here: \href{https://waymo.com/open/terms/}{\texttt{https://waymo.com/open/terms/}}.


\subsection{Hyperparameter tuning}
Table \ref{tab:hyperparams} shows the hyperparameters we swept over for all our experiments on ModelNet40, ModelNet40 Polka-dot, and the Waymo Open Motion Dataset.
\begin{table}[h!]
    \centering
    \caption{Model hyperparameter ranges for ModelNet40, ModelNet40 Polka-dot, and Waymo Open Motion Dataset.}
\begin{tabular}{ll}
\toprule
     \textbf{Hyperparameter} & \textbf{Value/Range}\\
     \midrule
     Feature dimension of VN-Transformer & \{32, 64, 128, 256, 512, 1024\}\\
     Number of attention heads & \{4, 8, 16, 32, 64, 128\}\\
     Hidden layer dimension in encoder's VN-MLP & \{32, 64, 128, 256, 512\}\\
     Learning rate & $10^{-3}$\\
     Learning rate schedule & Linear decay \\
     Optimizer & AdamW \citep{loshchilov2019decoupled}\\
     Epochs & 4000\\
     $\epsilon$ of VN-LinearWithBias & \{0, $10^{-6}$\}\\
     \bottomrule
\end{tabular}
    \label{tab:hyperparams}
\end{table}
\subsection{Compute infrastructure}
We trained our models on TPU-v3 devices. which are accessible through Google Cloud.
Our longest training jobs ran for less than 3 hours on 32 TPU cores.

\section{Proofs}\label{sec:proofs}
In this section, for convenience we will treat all 3D vectors as row-vectors: \textit{e.g.}, $x\in\mathbb{R}^{1\times 3}$. We also note that, while all our proofs of invariance/equivariance use matrices $V^{(n)}\in\mathbb{R}^{C\times 3}$, they can all be trivially generalized to $V^{(n)}\in\mathbb{R}^{C\times S}$.

\subsection{Partial invariance \& equivariance}
Assuming the input point-cloud consists of spatial inputs $X\in\mathcal{X}\subset\mathbb{R}^{N\times 3}$ and associated non-spatial attributes $A\in\mathcal{A}\subset \mathbb{R}^{N\times d_A}$ ($d_A$ is the number of non-spatial attributes associated with each point), we would like our model $f:\mathcal{X}\times \mathcal{A}\rightarrow \mathcal{Y}$ to satisfy the following property:
\begin{definition}[Partial rotation invariance]\label{eq:partial_rot_invariance}
A model $f:\mathcal{X}\times \mathcal{A}\rightarrow \mathcal{Y}$ satisfies partial rotation invariance if~~$\forall~X\in\mathcal{X},~A\in\mathcal{A},~R\in\textup{SO}(3),~~f(XR,A) =  f(X,A).$
\end{definition}
\begin{definition}[Partial rotation equivariance]\label{eq:partial_rot_equivariance}
A model $f:\mathcal{X}\times \mathcal{A}\rightarrow \mathcal{Y}$ (where $\mathcal{Y}\subset\mathbb{R}^{N_\text{out}\times(3+d_A)}$) satisfies partial rotation equivariance if
~~$\forall~X\in\mathcal{X},~A\in\mathcal{A},~R\in\textup{SO}(3),~~
f(XR,A)^{(:,:3)} =  f(X,A)^{(:,:3)}R.$
\end{definition}

We show here that the models in Figure \ref{fig:early_fusion} satisfy partial rotation invariance and equivariance (respectively).

Consider two rotation matrices $R_{d_1\times d_1}\in\textup{SO}(d_1)$ and $R_{d_2\times d_2}\in\textup{SO}(d_2)$.
\begin{lemma}\label{lemma:rotation_concat}
The matrix $R_{(d_1+d_2)\times(d_1+d_2)}\triangleq \begin{bmatrix}R_{d_1\times d_1} & \mathbf{0}_{d_1\times d_2}\\ \mathbf{0}_{d_2\times d_1} & R_{d_2\times d_2}\end{bmatrix}$ is a valid rotation matrix in $\textup{SO}(d_1+d_2)$.
\end{lemma}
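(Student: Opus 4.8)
The plan is to verify directly the two defining properties of a special-orthogonal matrix and conclude. Recall that $M\in\textup{SO}(n)$ precisely when $M$ is orthogonal ($MM^\intercal = I_n$) and $\det M = +1$. Writing $R \triangleq R_{(d_1+d_2)\times(d_1+d_2)}$ and abbreviating the diagonal blocks as $R_1 \triangleq R_{d_1\times d_1}\in\textup{SO}(d_1)$ and $R_2 \triangleq R_{d_2\times d_2}\in\textup{SO}(d_2)$, I would establish each property in turn by exploiting the block structure.

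First I would check orthogonality. Transposing the block matrix gives $R^\intercal = \begin{bmatrix} R_1^\intercal & \mathbf{0}_{d_1\times d_2}\\ \mathbf{0}_{d_2\times d_1} & R_2^\intercal \end{bmatrix}$, and block multiplication yields
\begin{align}
RR^\intercal = \begin{bmatrix} R_1 R_1^\intercal & \mathbf{0}_{d_1\times d_2} \\ \mathbf{0}_{d_2\times d_1} & R_2 R_2^\intercal \end{bmatrix} = \begin{bmatrix} I_{d_1} & \mathbf{0}_{d_1\times d_2} \\ \mathbf{0}_{d_2\times d_1} & I_{d_2} \end{bmatrix} = I_{d_1+d_2},
\end{align}
where the middle equality uses $R_1 R_1^\intercal = I_{d_1}$ and $R_2 R_2^\intercal = I_{d_2}$, which hold since $R_1$ and $R_2$ are themselves orthogonal. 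Hence $R$ is orthogonal. Next I would compute the determinant using the standard fact that the determinant of a block-diagonal (indeed block-triangular) matrix factors as the product of the determinants of its diagonal blocks, so that $\det R = \det R_1 \cdot \det R_2 = 1\cdot 1 = 1$, since both blocks have unit determinant by assumption. Combining orthogonality with $\det R = +1$ shows $R\in\textup{SO}(d_1+d_2)$, as claimed.

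The hard part will be essentially nonexistent: the statement is a bookkeeping exercise in block-matrix algebra. The only points requiring any care are invoking the block-multiplication and block-determinant identities correctly and keeping the identity and zero blocks consistent with the dimensions $d_1$ and $d_2$. The lemma's purpose is simply to certify that the combined spatial/non-spatial rotation used in the early-fusion construction is a genuine element of $\textup{SO}(d_1+d_2)$, so that the earlier equivariance results apply verbatim to the concatenated $C\times(3+d_A)$ representations.
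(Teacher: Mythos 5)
Your proposal is correct and takes essentially the same approach as the paper: both verify the two defining properties of $\textup{SO}(d_1+d_2)$ — orthogonality and unit determinant — by exploiting the block structure, with the determinant step identical (block-determinant factorization giving $\det R_1 \cdot \det R_2 = 1$). The only difference is cosmetic: you establish orthogonality by direct block multiplication, $RR^\intercal = I_{d_1+d_2}$, whereas the paper computes $R^{-1}$ via the block-inversion (Schur-complement) formula and checks that it coincides with $R^\intercal$; your version is, if anything, slightly more streamlined.
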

\begin{proof}
We begin by showing that $R_{(d_1+d_2)\times(d_1+d_2)}^\intercal = R_{(d_1+d_2)\times(d_1+d_2)}^{-1}$. We first compute $R_{(d_1+d_2)\times(d_1+d_2)}^\intercal$:
\begin{align}
R_{(d_1+d_2)\times(d_1+d_2)}^\intercal = \begin{bmatrix}R_{d_1\times d_1} & \mathbf{0}_{d_1\times d_2}\\ \mathbf{0}_{d_2\times d_1} & R_{d_2\times d_2}\end{bmatrix}^\intercal = \begin{bmatrix}R_{d_1\times d_1}^\intercal & \mathbf{0}_{d_2\times d_1}^\intercal\\ \mathbf{0}_{d_1\times d_2}^\intercal & R_{d_2\times d_2}^\intercal\end{bmatrix} \overset{(*)}{=} \begin{bmatrix}R_{d_1\times d_1}^{-1} & \mathbf{0}_{d_1\times d_2}\\ \mathbf{0}_{d_2\times d_1} & R_{d_2\times d_2}^{-1}\end{bmatrix},
\end{align}
where $(*)$ holds since $R_{d_1\times d_1} \in \textup{SO}(d_1)$ and $R_{d_2\times d_2} \in \textup{SO}(d_2)$ by assumption.\\
Now, we compute $R_{(d_1+d_2)\times(d_1+d_2)}^{-1}$:
\begin{align}
    R_{(d_1+d_2)\times(d_1+d_2)}^{-1} &= \begin{bmatrix}R_{d_1\times d_1} & \mathbf{0}_{d_1\times d_2}\\ \mathbf{0}_{d_2\times d_1} & R_{d_2\times d_2}\end{bmatrix}^{-1} \\&= \begin{bmatrix}\left[R_{d_1\times d_1}-0_{d_1\times d_2}R_{d_2\times d_2}^{-1}0_{d_2\times d_1}\right]^{-1} & \mathbf{0}_{d_1\times d_2}\\ \mathbf{0}_{d_2\times d_1} & \left[R_{d_2\times d_2}-0_{d_2\times d_1}R_{d_1\times d_1}^{-1}0_{d_1\times d_2}\right]^{-1}\end{bmatrix}\\&= \begin{bmatrix}R_{d_1\times d_1}^{-1} & \mathbf{0}_{d_1\times d_2}\\ \mathbf{0}_{d_2\times d_1} & R_{d_2\times d_2}^{-1}\end{bmatrix}.
\end{align}
Hence, we have that $R_{(d_1+d_2)\times(d_1+d_2)}^{-1} = R_{(d_1+d_2)\times(d_1+d_2)}^\intercal$.
Finally, we show that $\det(R_{(d_1+d_2)\times(d_1+d_2)})=1$:
\begin{align}
\det(R_{(d_1+d_2)\times(d_1+d_2)}) = \det \left(\begin{bmatrix}R_{d_1\times d_1} & \mathbf{0}_{d_1\times d_2}\\ \mathbf{0}_{d_2\times d_1} & R_{d_2\times d_2}\end{bmatrix}\right) = \det(R_{d_1\times d_1})\det(R_{d_2\times d_2}) \overset{(*)}{=} 1\cdot 1 = 1,
\end{align}
where $(*)$ holds since $R_{d_1\times d_1} \in \textup{SO}(d_1)$ and $R_{d_2\times d_2} \in \textup{SO}(d_2)$ by assumption.
\end{proof}
\setcounter{proposition}{4}
\begin{proposition}\label{prop:classifier_partial_invariance_SM}
The VN-Transformer model $f:\mathcal{X}\times\mathcal{A}\rightarrow\mathcal{Y}$ (where $\mathcal{Y}\subset\mathbb{R}^\kappa$) shown in Figure \ref{fig:vn_transformer_classifier_early_fusion} satisfies partial rotation invariance.
\end{proposition}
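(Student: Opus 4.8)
The plan is to write $f$ as a composition of an equivariant ``body'' (the VN-MLP and VN-Transformer encoder blocks) followed by an invariant ``head'' (the VN-Invariant layer, then the standard Flatten/Pool/MLP), and to reduce the spatial-only rotation to an honest rotation of the full fused representation so that the equivariance and invariance facts already established apply verbatim. First I would fix the early-fusion representation: the model concatenates the spatial coordinates $X\in\mathbb{R}^{N\times 3}$ and the attributes $A\in\mathbb{R}^{N\times d_A}$ into a single Vector-Neuron tensor of $C\times(3+d_A)$ matrices, whose first $3$ columns carry the spatial information and whose last $d_A$ columns carry the rotation-inert attributes.

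The crucial observation is that replacing $X$ by $XR$ while leaving $A$ unchanged is \emph{exactly} the action of the block-diagonal matrix
\[
\tilde R \triangleq \begin{bmatrix} R & \mathbf{0}_{3\times d_A} \\ \mathbf{0}_{d_A\times 3} & I_{d_A}\end{bmatrix}
\]
on the fused representation. Since $I_{d_A}\in\textup{SO}(d_A)$ (its determinant is $1$), Lemma~\ref{lemma:rotation_concat} applied with $d_1=3$, $R_{d_1\times d_1}=R$, $d_2=d_A$, $R_{d_2\times d_2}=I_{d_A}$ gives $\tilde R\in\textup{SO}(3+d_A)$. Thus the partial rotation under consideration is a genuine element of the rotation group acting on the $(3+d_A)$-dimensional representation, and I can now invoke the exact equivariance results rather than any weaker partial statement.

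Next I would push $\tilde R$ through the network. Every VN layer in the body is equivariant under an arbitrary rotation of the representation dimension; as noted at the start of this appendix, the equivariance proofs written for $V^{(n)}\in\mathbb{R}^{C\times 3}$ carry over unchanged to $V^{(n)}\in\mathbb{R}^{C\times S}$, here with $S=3+d_A$. Hence feeding the $\tilde R$-rotated input through the body produces precisely the $\tilde R$-rotated intermediate features. The VN-Invariant layer then sends these equivariant features to features invariant under $\textup{SO}(3+d_A)$, so its output is identical whether or not $\tilde R$ was applied, and the subsequent Flatten/Pool/MLP head acts on an already-invariant tensor and cannot reintroduce any dependence on $R$. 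Chaining these observations yields $f(XR,A)=f(X,A)$ for all $X,A,R$, which is exactly partial rotation invariance (Definition~\ref{eq:partial_rot_invariance}).

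The main obstacle — really the only conceptual step — is the reduction in the second paragraph: recognizing that a rotation applied to the spatial columns alone is the restriction of a block-diagonal rotation living in the larger group $\textup{SO}(3+d_A)$, and confirming via Lemma~\ref{lemma:rotation_concat} that this block-diagonal matrix is a legitimate rotation so that the dimension-agnostic equivariance of the VN layers and the invariance of VN-Invariant genuinely apply. Once that embedding is in place, everything else is bookkeeping about the composition of an equivariant body with an invariant head.
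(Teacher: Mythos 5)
Your proof is correct and follows essentially the same route as the paper's: both embed the spatial-only rotation as the block-diagonal matrix $\begin{bmatrix} R & \mathbf{0}_{3\times d_A} \\ \mathbf{0}_{d_A\times 3} & I_{d_A}\end{bmatrix}\in\textup{SO}(3+d_A)$ via Lemma~\ref{lemma:rotation_concat}, and then invoke $\textup{SO}(3+d_A)$-invariance of the concatenated model (equivariant body followed by invariant head) to conclude $f(XR,A)=f(X,A)$. Your write-up is, if anything, slightly more explicit than the paper's in justifying that the VN layers' equivariance proofs extend from $\mathbb{R}^{C\times 3}$ to $\mathbb{R}^{C\times(3+d_A)}$ and that $I_{d_A}\in\textup{SO}(d_A)$, but the argument is the same.
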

\begin{proof}\hfill\\
\begin{itemize}[noitemsep,topsep=0pt]
\item For convenience, we reparametrize the model $f$ as $f_\text{concat}:\mathbb{R}^{N\times(3+d_A)}\rightarrow \mathbb{R}^\kappa$ (with $\kappa$ the number of object classes) where $f_\text{concat}([X,A]) = f(X,A)$. It then suffices to show that $f_\text{concat}([XR,A]) = f_\text{concat}([X,A])R$.
\item First, note that $f_\text{concat}$  is $\textup{SO}(3+d_A)$-invariant, since it is composed of $\text{SO}(3+d_A)$-equivariant operations followed by a $\textup{SO}(3+d_A)$-invariant operation.
\item Consider the matrix $R_{(3+d_A)\times(3+d_A)} \triangleq \begin{bmatrix}R & \mathbf{0}_{3\times d_A}\\ \mathbf{0}_{d_A\times 3} & I_{d_A\times d_A}\end{bmatrix}$, where $R\in SO(3)$ is an arbitrary 3-dimensional rotation. From Lemma \ref{lemma:rotation_concat}, $R_{(3+d_A)\times(3+d_A)}\in SO(3+d_A)$.

\begin{align}
    &f_\text{concat}([X,A]R_{(3+d_A)\times(3+d_A)}) \overset{(*)}{=} f_\text{concat}([X,A])\\
    \Rightarrow & f_\text{concat}([XR + A\mathbf{0}_{d_A\times 3}, X\mathbf{0}_{3\times d_A} + A I_{d_A\times d_A}]) = f_\text{concat}([X,A])\\
    \Rightarrow & f_\text{concat}([XR, A]) = f_\text{concat}([X,A]),
\end{align}
where $(*)$ holds from $\textup{SO}(3+d_A)$-invariance of $f_\textup{concat}$.
\end{itemize} 
\end{proof}
\begin{proposition}\label{prop:classifier_partial_equivariance_SM}
The VN-Transformer model $f:\mathcal{X}\times\mathcal{A}\rightarrow\mathcal{Y}$ (where $\mathcal{Y}\subset\mathbb{R}^{N_\text{out}\times (3+d_A)}$) shown in Figure \ref{fig:vn_transformer_trajectory_early_fusion} satisfies partial rotation equivariance.
\end{proposition}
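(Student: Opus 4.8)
The plan is to mirror the proof of Proposition \ref{prop:classifier_partial_invariance_SM}, replacing the final invariance step by an equivariance step and then restricting attention to the spatial columns. As before, I would first reparametrize $f$ via the concatenation map $f_\text{concat}:\mathbb{R}^{N\times(3+d_A)}\rightarrow\mathbb{R}^{N_\text{out}\times(3+d_A)}$ defined by $f_\text{concat}([X,A]) = f(X,A)$, so that it suffices to reason about how $f_\text{concat}$ transforms under right-multiplication by a $(3+d_A)$-dimensional rotation.

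The key structural observation is that, unlike the classifier of Figure \ref{fig:vn_transformer_classifier_early_fusion}, the trajectory model of Figure \ref{fig:vn_transformer_trajectory_early_fusion} never applies an invariant collapse: it is a composition of $\textup{SO}(3+d_A)$-equivariant VN operations (VN-MLP, VN-MultiHeadAttn, VN-LayerNorm, and the equivariant output head), so $f_\text{concat}$ is itself $\textup{SO}(3+d_A)$-equivariant, i.e. $f_\text{concat}([X,A]R') = f_\text{concat}([X,A])R'$ for every $R'\in\textup{SO}(3+d_A)$. I expect justifying this composition to be the main obstacle, since one must confirm that each layer in the early-fusion pipeline acts equivariantly on the full $(3+d_A)$-dimensional representation axis and that, crucially, no step projects to an invariant quantity (which is exactly where this proof diverges from Proposition \ref{prop:classifier_partial_invariance_SM}).

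With equivariance of $f_\text{concat}$ in hand, I would instantiate the block-diagonal matrix $R_{(3+d_A)\times(3+d_A)} \triangleq \begin{bmatrix}R & \mathbf{0}_{3\times d_A}\\ \mathbf{0}_{d_A\times 3} & I_{d_A\times d_A}\end{bmatrix}$, which lies in $\textup{SO}(3+d_A)$ by Lemma \ref{lemma:rotation_concat}. A direct block multiplication shows $[X,A]R_{(3+d_A)\times(3+d_A)} = [XR, A]$, since the identity block leaves the attribute columns fixed. Applying equivariance then yields $f_\text{concat}([XR,A]) = f_\text{concat}([X,A])R_{(3+d_A)\times(3+d_A)}$.

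Finally I would restrict to the spatial columns. Writing the output as $[Y_\text{spatial}, Y_\text{attr}]$ with $Y_\text{spatial}$ its first three columns, the same block structure gives $(f_\text{concat}([X,A])R_{(3+d_A)\times(3+d_A)})^{(:,:3)} = Y_\text{spatial}R = f_\text{concat}([X,A])^{(:,:3)}R$. Combining this with the previous display and unwinding the reparametrization produces $f(XR,A)^{(:,:3)} = f(X,A)^{(:,:3)}R$, which is precisely partial rotation equivariance. The steps involving the block multiplications are routine; the only genuine content is the equivariance-of-composition claim in the second paragraph.
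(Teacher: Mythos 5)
Your proposal is correct and follows essentially the same route as the paper's proof: reparametrize via $f_\text{concat}$, assert $\textup{SO}(3+d_A)$-equivariance of the early-fusion composition, apply the block-diagonal rotation from Lemma \ref{lemma:rotation_concat}, and read off the spatial columns. The paper likewise leaves the equivariance-of-composition step as an assertion, so your proof matches it in both structure and level of detail.
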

\begin{proof}\hfill\\
\begin{itemize}[noitemsep,topsep=0pt]
\item For convenience, we reparametrize the model $f$ as $f_\text{concat}:\mathbb{R}^{N\times(3+d_A)}\rightarrow \mathbb{R}^{N_\text{out}\times (3+d_A)}$ where $f_\text{concat}([X,A]) = f(X,A)$. It then suffices to show that $f_\text{concat}([XR,A])^{(:,:3)} = f_\text{concat}([X,A])^{(:,:3)}R$.
\item First, note that $f_\text{concat}$  is $\textup{SO}(3+d_A)$-equivariant, since it is composed of $\text{SO}(3+d_A)$-equivariant operations.
\item Consider the matrix $R_{(3+d_A)\times(3+d_A)} \triangleq \begin{bmatrix}R & \mathbf{0}_{3\times d_A}\\ \mathbf{0}_{d_A\times 3} & I_{d_A\times d_A}\end{bmatrix}$, where $R\in SO(3)$ is an arbitrary 3-dimensional rotation. From Lemma \ref{lemma:rotation_concat}, $R_{(3+d_A)\times(3+d_A)}\in SO(3+d_A)$.
\begin{small}
\begin{align}
    &f_\text{concat}([X,A]R_{(3+d_A)\times(3+d_A)}) \overset{(*)}{=} f_\text{concat}([X,A])R_{(3+d_A)\times(3+d_A)}\\
    \Rightarrow & f_\text{concat}([XR + A\mathbf{0}_{d_A\times 3}, X\mathbf{0}_{3\times d_A} + A I_{d_A\times d_A}])\notag \\&= \Big[f_\text{concat}([X,A])^{(:,:3)}R + f_\text{concat}([X,A])^{(:,4:)}\mathbf{0}_{d_A\times 3},f_\text{concat}([X,A])^{(:,:3)}\mathbf{0}_{3\times d_A} + f_\text{concat}([X,A])^{(:,4:)}\mathbf{I}_{d_A\times d_A}\Big]\\
    \Rightarrow & f_\text{concat}([XR, A]) = \left[f_\text{concat}([X,A])^{(:,:3)}R,f_\text{concat}([X,A])^{(:,4:)}\right]\\
    \Rightarrow & f_\text{concat}([XR, A])^{(:,:3)} = f_\text{concat}([X,A])^{(:,:3)}R,
\end{align}
\end{small}
where $(*)$ holds from $\textup{SO}(3+d_A)$-equivariance of $f_\textup{concat}$.
\end{itemize} 
\end{proof}

\subsection{$\epsilon$-approximate equivariance}
\setcounter{proposition}{2}
\begin{proposition}[Restated]
$\textup{VN-LinearWithBias}(\cdot; W,U,\epsilon)$ is $(2\epsilon\sqrt{C'})$-approximately equivariant. This bound is tight when $R=-I_{3\times 3}$.
\end{proposition}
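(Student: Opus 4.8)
The plan is to compute the equivariance violation $\Delta(f,V^{(n)},R) = ||f(V^{(n)}R)-f(V^{(n)})R||_F$ directly for $f = \textup{VN-LinearWithBias}(\cdot\,;W,U,\epsilon)$, exploiting that the linear part is already exactly equivariant so that only the bias term can contribute. Writing $f(V^{(n)}) = WV^{(n)} + \epsilon U$, I would first expand $f(V^{(n)}R) = WV^{(n)}R + \epsilon U$ and $f(V^{(n)})R = WV^{(n)}R + \epsilon UR$. Subtracting, the $WV^{(n)}R$ terms cancel, leaving the clean identity $f(V^{(n)}R) - f(V^{(n)})R = \epsilon(U - UR) = \epsilon U(I-R)$. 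Hence $\Delta = \epsilon\,||U(I-R)||_F$, and the whole problem reduces to bounding $||U(I-R)||_F$.

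Next I would decompose the Frobenius norm row by row, $||U(I-R)||_F^2 = \sum_{c=1}^{C'} ||U^{(c)}(I-R)||_2^2$, where each $U^{(c)} = B^{(c)}/||B^{(c)}||_2$ is by construction a unit row-vector. For a single row $u = U^{(c)}$ I would apply the triangle inequality $||u(I-R)||_2 = ||u - uR||_2 \le ||u||_2 + ||uR||_2$ and then use orthogonality of $R$ (so $||uR||_2 = ||u||_2 = 1$) to conclude $||u(I-R)||_2 \le 2$. Summing the squared bound over the $C'$ rows gives $||U(I-R)||_F^2 \le 4C'$, and therefore $\Delta \le 2\epsilon\sqrt{C'}$, which is the claimed bound.

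Finally, for tightness I would substitute $R = -I_{3\times 3}$: then $I - R = 2I$, so $U(I-R) = 2U$ and $||2U||_F = 2||U||_F = 2\sqrt{\sum_{c=1}^{C'}||U^{(c)}||_2^2} = 2\sqrt{C'}$, saturating the inequality. The one subtlety I expect to flag — and the main (mild) obstacle — is that $-I_{3\times 3}$ has determinant $-1$ and so is not literally in $\textup{SO}(3)$; tightness is thus attained on the boundary of the orthogonal group (and is genuinely achievable for the even-dimensional generalization $V^{(n)}\in\mathbb{R}^{C\times S}$ noted earlier, where $-I_S\in\textup{SO}(S)$). Since all the slack in the bound comes from the per-row triangle-inequality step, pinning down when the bound is tight amounts to observing that equality there requires $uR = -u$ for every row simultaneously, which is exactly what $R=-I$ enforces.
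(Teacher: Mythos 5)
Your proof is correct and takes essentially the same route as the paper's: the linear term cancels exactly, the violation reduces to $\epsilon\,\|U-UR\|_F$, the Frobenius norm is split over the $C'$ unit rows, and each row contributes at most $2$ --- your per-row triangle inequality is just a repackaging of the paper's expand-the-square-and-bound-the-cross-term step, since both ultimately use $\|U^{(c)}\|_2=\|U^{(c)}R\|_2=1$. The one place you go beyond the paper is the tightness claim: the paper's proof only establishes the upper bound and never verifies saturation at $R=-I_{3\times 3}$, whereas you compute it explicitly and correctly flag that $\det(-I_{3\times 3})=-1$, so this matrix lies in $\textup{O}(3)\setminus\textup{SO}(3)$ and the bound is genuinely attained within the rotation group only in the even-dimensional generalization $V^{(n)}\in\mathbb{R}^{C\times S}$ --- a subtlety the paper's statement glosses over.
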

\begin{proof}
Set $f\triangleq \textup{VN-LinearWithBias}(\cdot;W,U,\epsilon)$:
\begin{align}
    f(XR)-f(X)R &= (WXR+\epsilon U)-(WX+\epsilon U)R\\
    &= \epsilon U - \epsilon UR\\
    \Rightarrow \Delta(f,X,R)^2 &= ||f(XR)-f(X)R||_F^2 \\
    &= \sum_{c=1}^{C'} ||\epsilon (U^{(c)} - U^{(c)}R)||^2_2\\
    &= \epsilon^2 \sum_{c=1}^{C'} ||U^{(c)}-U^{(c)}R||^2_2\\
    &=\epsilon^2 \sum_{c=1}^{C'} ||U^{(c)}||_2^2 + ||U^{(c)}R||_2^2 -2U^{(c)}R^\intercal U^{(c)\intercal}\\
    &\leq \epsilon^2 \sum_{c=1}^{C'} ||U^{(c)}||_2^2 + ||U^{(c)}R||_2^2 +2U^{(c)}U^{(c)\intercal}\\
    &= \epsilon^2 \sum_{c=1}^{C'} 4||U^{(c)}||_2^2\\
    &= 4\epsilon^2 {C'}\\
    \Rightarrow \Delta(f,X,R) &\leq 2\epsilon\sqrt{C'}.
\end{align}
\end{proof}
\begin{lemma}\label{lemma:comp}
Suppose
\begin{enumerate}[noitemsep,topsep=0pt]
\item $f:\mathcal{X}_1\rightarrow\mathcal{X}_2$ (with $\mathcal{X}_1\subset\mathbb{R}^{C_1\times 3}, \mathcal{X}_2\subset\mathbb{R}^{C_2\times 3}$) is $\epsilon_f$-approximately equivariant. 
\item $g:\mathcal{X}_2\rightarrow\mathcal{X}_3$ (with $\mathcal{X}_3\subset\mathbb{R}^{C_3\times 3}$) is $\epsilon_g$-approximately equivariant and $L_g$-Lipschitz (w.r.t. the Frobenius norm).
\end{enumerate}
Then the composition $g\circ f:\mathcal{X}_1\rightarrow\mathcal{X}_3$ is $(L_g\epsilon_f+\epsilon_g)$-approximately equivariant.
\end{lemma}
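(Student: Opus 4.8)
The plan is to bound the equivariance violation of $g\circ f$ directly from its definition, $\Delta(g\circ f,X,R) = ||g(f(XR))-g(f(X))R||_F$, by inserting the intermediate quantity $g(f(X)R)$ and applying the triangle inequality. This splits the violation into two pieces: one measuring how $g$ transports the violation already incurred by $f$, and one that is the intrinsic violation of $g$ itself. Concretely, I would start from
\begin{align}
||g(f(XR))-g(f(X))R||_F \leq ||g(f(XR))-g(f(X)R)||_F + ||g(f(X)R)-g(f(X))R||_F.
\end{align}

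For the first term I would invoke the $L_g$-Lipschitz property of $g$ (w.r.t. $||\cdot||_F$) to get $||g(f(XR))-g(f(X)R)||_F \leq L_g||f(XR)-f(X)R||_F$, and then recognize the inner factor as $\Delta(f,X,R)\leq \epsilon_f$ by the $\epsilon_f$-approximate equivariance of $f$; hence this term is at most $L_g\epsilon_f$. For the second term I would set $Y\triangleq f(X)\in\mathcal{X}_2$ and observe that $||g(YR)-g(Y)R||_F = \Delta(g,Y,R)\leq \epsilon_g$ directly from the $\epsilon_g$-approximate equivariance of $g$. Summing the two estimates gives $\Delta(g\circ f,X,R)\leq L_g\epsilon_f+\epsilon_g$ for all $X$ and $R$, which is the claim.

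There is no real obstacle here; the only point requiring care — which I would flag explicitly before applying the triangle inequality — is that the inserted term $g(f(X)R)$ must be a legitimate evaluation of $g$, i.e.\ $f(X)R$ must lie in the domain $\mathcal{X}_2$ of $g$. This holds because the feature spaces $\mathcal{X}_k\subset\mathbb{R}^{C_k\times 3}$ are closed under the right action of $R\in\textup{SO}(3)$ on the spatial index (the rotation acts blockwise on each row of $Y\in\mathbb{R}^{C_2\times 3}$, preserving membership), so both $f(X)R$ and the Lipschitz/equivariance hypotheses apply to it. The conceptual heart of the argument is simply the choice of the intermediate point $g(f(X)R)$, which cleanly decouples ``$g$ stretching $f$'s violation by $L_g$'' from ``$g$ contributing its own violation $\epsilon_g$,'' matching the intuition stated after Proposition~\ref{prop:composition}. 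The full Proposition then follows by iterating this Lemma $K-1$ times over the chain $f_1,\ldots,f_K$.
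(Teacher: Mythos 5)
Your proof is correct and follows essentially the same route as the paper's: insert the intermediate point $g(f(X)R)$, apply the triangle inequality, bound the first term by $L_g\Delta(f,X,R)\leq L_g\epsilon_f$ via the Lipschitz property and the second by $\epsilon_g$ via the approximate equivariance of $g$. Your extra remark that $f(X)R$ must lie in $\mathcal{X}_2$ is a sensible bit of care the paper leaves implicit, but it does not change the argument.
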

\begin{proof}
\begin{align}
    \Delta(g\circ f , X, R)  &= ||g(f(XR))-g(f(X))R||_F\\
    &= ||g(f(XR))-g(f(X)R)+g(f(X)R)-g(f(X))R||_F\\
    &\leq ||g(f(XR))-g(f(X)R)||_F + ||g(f(X)R)-g(f(X))R||_F\\
    &\overset{(*)}{\leq} ||g(f(XR))-g(f(X)R)||_F + \epsilon_g\\
    &\overset{(**)}{\leq} L_g||f(XR)-f(X)R||_F + \epsilon_g\\
    &=L_g\Delta(f,X,R) + \epsilon_g\label{eq:gcircf}\\
    &\overset{(***)}{\leq}L_g\epsilon_f + \epsilon_g,
\end{align}
where $(*)$ holds from $\epsilon_g$-approximate equivariance of $g$, $(**)$ holds because $g$ is $L_g$-Lipschitz, and $(***)$ holds from $\epsilon_f$-approximate equivariance of $f$.
\end{proof}
\begin{proposition}[Restated]
Suppose we have $K$ functions $f_k:\mathcal{X}_k\rightarrow\mathcal{X}_{k+1}$ (with $\mathcal{X}_k\subset\mathbb{R}^{C_k\times3},\mathcal{X}_{k+1}\subset\mathbb{R}^{C_{k+1}\times3}$) for $~k\in\{1,\ldots,K\}$, satisfying the following:
\begin{enumerate}
\item $f_k$ is $\epsilon_k$-approximately equivariant for all $k\in\{1,\ldots,K\}$.
\item $f_k$ is $L_k$-Lipschitz (w.r.t. $||\cdot||_F$) for all $k\in\{2,\ldots,K\}$.
\end{enumerate}
Then, the composition $f_K\circ \cdots \circ f_1$ is $\epsilon_{1\ldots K}$-approximately equivariant, where:
\begin{align}
     \epsilon_{1\ldots K} \triangleq L_K(\cdots (L_3(L_2\epsilon_1 + \epsilon_2)+\epsilon_3)+\cdots)+\epsilon_K
\end{align}
\end{proposition}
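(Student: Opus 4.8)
The plan is to prove this by induction on $K$, using Lemma~\ref{lemma:comp} as the engine. The key observation is that the nested formula for $\epsilon_{1\ldots K}$ is exactly what one obtains by unrolling the simple recurrence $\epsilon_{1\ldots k} = L_k\,\epsilon_{1\ldots(k-1)} + \epsilon_k$, and each step of this recurrence is a single application of the two-function composition lemma.

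Concretely, I would introduce the partial compositions $g_k \triangleq f_k \circ \cdots \circ f_1$ for $k\in\{1,\ldots,K\}$, and prove by induction that $g_k$ is $\epsilon_{1\ldots k}$-approximately equivariant. The base case $k=1$ is immediate: $g_1 = f_1$ is $\epsilon_1$-approximately equivariant by assumption~1, and $\epsilon_{1\ldots 1} = \epsilon_1$. For the inductive step, assume $g_{k-1}$ is $\epsilon_{1\ldots(k-1)}$-approximately equivariant. Write $g_k = f_k \circ g_{k-1}$ and apply Lemma~\ref{lemma:comp} with the inner function $f \triangleq g_{k-1}$ (which is $\epsilon_{1\ldots(k-1)}$-approximately equivariant) and the outer function $g \triangleq f_k$ (which is $\epsilon_k$-approximately equivariant and $L_k$-Lipschitz). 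The lemma then yields that $g_k$ is $(L_k\,\epsilon_{1\ldots(k-1)} + \epsilon_k)$-approximately equivariant, establishing the recurrence $\epsilon_{1\ldots k} = L_k\,\epsilon_{1\ldots(k-1)} + \epsilon_k$.

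Finally, I would unroll this recurrence from $k=1$ up to $k=K$ to recover the stated nested expression: starting from $\epsilon_{1\ldots 1}=\epsilon_1$ and substituting repeatedly gives $\epsilon_{1\ldots 2} = L_2\epsilon_1+\epsilon_2$, then $\epsilon_{1\ldots 3} = L_3(L_2\epsilon_1+\epsilon_2)+\epsilon_3$, and so on, matching $\epsilon_{1\ldots K} = L_K(\cdots(L_3(L_2\epsilon_1+\epsilon_2)+\epsilon_3)+\cdots)+\epsilon_K$ exactly. Taking $k=K$ completes the proof, since $g_K = f_K\circ\cdots\circ f_1$ is the full composition.

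The one point requiring care is the asymmetry in the hypotheses: assumption~2 only guarantees a Lipschitz constant for $f_k$ with $k\in\{2,\ldots,K\}$, not for $f_1$. This is precisely the right amount of structure, because in every application of Lemma~\ref{lemma:comp} the \emph{outer} function is $f_k$ with $k\ge 2$ (so its Lipschitz constant $L_k$ is available), while $f_1$ only ever appears inside the composition as the innermost map, where the lemma demands no Lipschitz property of its inner argument. I do not anticipate any substantive obstacle beyond tracking this bookkeeping correctly; the heavy lifting has already been done in Lemma~\ref{lemma:comp}.
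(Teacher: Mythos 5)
Your proof is correct and takes essentially the same route as the paper: the paper also iterates Lemma~\ref{lemma:comp} (specifically its intermediate inequality~\eqref{eq:gcircf}) with $g\triangleq f_k$ and $f\triangleq f_{k-1}\circ\cdots\circ f_1$, only bounding $\Delta(f_1,X,R)\leq\epsilon_1$ at the final step, which is the same recurrence you formalize as an induction. Your observation about the asymmetry in the Lipschitz hypotheses (no constant needed for $f_1$) is also exactly right and matches the paper's usage.
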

\begin{proof}
\begin{align}
\Delta(f_K \circ \dots \circ f_1, X, R) &= \Delta(f_K \circ (f_{K-1}\circ\dots\circ f_1), X, R) \\
&~~\overset{(K)}{\leq} L_K\Delta(f_{K-1} \circ \dots \circ f_1, X, R) + \epsilon_K\\
&\overset{(K-1)}{\leq} L_K(L_{K-1}(\Delta(f_{K-2} \circ \dots \circ f_1, X, R) + \epsilon_{K-1})+ \epsilon_K\\
&\hspace{14pt}\vdots\\
&~~~\overset{(1)}{\leq} L_K(\cdots (L_3(L_2\Delta(f_1,X,R) + \epsilon_2)+\epsilon_3)+\cdots)+\epsilon_K\\
&~~~\leq L_K(\cdots (L_3(L_2\epsilon_1 + \epsilon_2)+\epsilon_3)+\cdots)+\epsilon_K
\end{align}
where $(K)-(1)$ hold from applying inequality \eqref{eq:gcircf} from Lemma \ref{lemma:comp} $K$ times (setting $g\triangleq f_k$ and $f\triangleq f_{k-1} \circ \dots f_1$ at each step).
\end{proof}
Figure \ref{fig:eq_violation_propagation} illustrates the propagation of equivariance violations through a composition of 3 functions.
\begin{figure}
    \centering
    \includegraphics[width=0.7\textwidth]{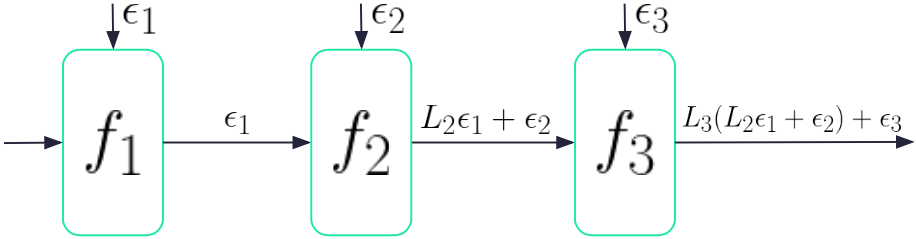}
    \caption{Violations of equivariance in a neural network with $L_k$-Lipschitz and $\epsilon_k$-approximately equivariant layers.}
    \label{fig:eq_violation_propagation}
\end{figure}
\setcounter{proposition}{6}
\begin{proposition}
The $\textup{VN-Linear}(\cdot;W):\mathbb{R}^{C\times S}\rightarrow \mathbb{R}^{C'\times S}$ layer is $\sigma(W)$-Lipschitz w.r.t. the Frobenius norm, where $\sigma(W)$ is the spectral norm of $W$. The same holds for $\textup{VN-LinearWithBias}(\cdot; W, U, \epsilon)$.
\end{proposition}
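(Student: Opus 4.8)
The plan is to show directly that the map $f = \textup{VN-Linear}(\cdot; W)$ satisfies $\|f(V_1) - f(V_2)\|_F \le \sigma(W)\,\|V_1 - V_2\|_F$ for all $V_1, V_2 \in \mathbb{R}^{C\times S}$, which is exactly the definition of being $\sigma(W)$-Lipschitz with respect to the Frobenius norm. Since $f$ is linear, the difference collapses to $f(V_1) - f(V_2) = W(V_1 - V_2)$, so it suffices to bound $\|WD\|_F$ in terms of $\|D\|_F$ for an arbitrary matrix $D \triangleq V_1 - V_2 \in \mathbb{R}^{C\times S}$.

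The key step is to decompose $D$ into its $S$ columns, $D = [d_1, \ldots, d_S]$ with $d_s \in \mathbb{R}^C$, and observe that left-multiplication by $W$ acts columnwise: $WD = [Wd_1, \ldots, Wd_S]$. Because the Frobenius norm is just the Euclidean norm of the stacked columns, $\|WD\|_F^2 = \sum_{s=1}^S \|Wd_s\|_2^2$. I would then invoke the defining property of the spectral norm as the operator norm induced by the Euclidean norm, namely $\|Wx\|_2 \le \sigma(W)\,\|x\|_2$ for every $x \in \mathbb{R}^C$. Applying this to each column and summing gives $\|WD\|_F^2 \le \sigma(W)^2 \sum_{s=1}^S \|d_s\|_2^2 = \sigma(W)^2 \|D\|_F^2$, and taking square roots yields the claim.

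For $\textup{VN-LinearWithBias}(\cdot; W, U, \epsilon)$, I would note that the additive bias $\epsilon U$ is independent of the input and therefore cancels in the difference: $(WV_1 + \epsilon U) - (WV_2 + \epsilon U) = W(V_1 - V_2)$. Hence the difference is identical to the VN-Linear case, and the same bound $\sigma(W)\,\|V_1 - V_2\|_F$ applies verbatim, so both layers share the Lipschitz constant $\sigma(W)$.

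There is no genuine obstacle here; the argument is routine once the columnwise decomposition is spotted. The only point that warrants care is making explicit that the spectral norm $\sigma(W)$ is precisely the $\ell_2$-operator norm of $W$, which is what licenses the per-column inequality $\|Wd_s\|_2 \le \sigma(W)\,\|d_s\|_2$; everything else is assembling the Pythagorean identity for the Frobenius norm and observing that the bias term drops out under subtraction.
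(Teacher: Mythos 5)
Your proposal is correct and matches the paper's proof essentially step for step: both decompose the difference into columns, apply the $\ell_2$-operator-norm property of the spectral norm columnwise, reassemble via the Pythagorean identity for the Frobenius norm, and dispose of the bias term by noting it cancels under subtraction. The only cosmetic difference is that you invoke linearity before splitting into columns while the paper does so per column, which changes nothing of substance.
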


\begin{proof}
Consider $X_1,X_2\in\mathbb{R}^{C\times S}$. We can write:
\begin{align}
||WX_1-WX_2||_F^2 &= \sum_{s=1}^S||WX_1^{(:,s)}-WX_2^{(:,s)}||_2^2\\
&=  \sum_{s=1}^S||W(X_1^{(:,s)}-X_2^{(:,s)})||_2^2\\
&\leq\sum_{s=1}^S\sigma^2(W)||X_1^{(:,s)}-X_2^{(:,s)}||_2^2\\
&=  \sigma(W)^2\sum_{s=1}^S||X_1^{(:,s)}-X_2^{(:,s)}||_2^2\\
&= \sigma(W)^2||X_1-X_2||_F^2\\
\Rightarrow ||WX_1-WX_2||_F &\leq \sigma(W){||X_1-X_2||_F}
\end{align}
To see this for VN-LinearWithBias, note that $||(WX_1+\epsilon U)-(WX_2+\epsilon U)||_F^2 = ||WX_1 - WX_2||_F^2$ -- we can then show the same result using the above proof.
\end{proof}

\subsection{Equivariance of VN-LayerNorm}
We define the VN analog of the layer normalization operation as follows:
\begin{align}
    &\text{VN-LayerNorm}(V^{(n)}) \triangleq \left[\frac{V^{(n,c)}}{||V^{(n,c)}||_2}\right]_{c=1}^C \odot \text{LayerNorm}\left(\left[||V^{(n,c)}||_2\right]_{c=1}^C\right)\mathds{1}_{1\times 3} 
\end{align}
\begin{proposition}
$\textup{VN-LayerNorm}: \mathbb{R}^{C\times 3}\rightarrow\mathbb{R}^{C\times 3}$ is rotation-equivariant.
\end{proposition}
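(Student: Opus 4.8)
The plan is to show that VN-LayerNorm commutes with the rotation $R$ by checking that each of the two factors in the elementwise product behaves correctly under rotation. Write $V^{(n)}R$ and observe that the rotation acts row-by-row: the $c$-th row of $V^{(n)}R$ is $V^{(n,c)}R$. The key observation is that $R\in\textup{SO}(3)$ is orthogonal, so it preserves the $\ell_2$-norm of every row: $||V^{(n,c)}R||_2 = ||V^{(n,c)}||_2$ for all $c$. This single fact is what makes everything go through, since the norms are exactly the quantities fed into the (non-equivariant) standard \text{LayerNorm}.

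First I would compute the direction factor on the rotated input. For each channel $c$, the normalized row becomes
\begin{align}
\frac{(V^{(n)}R)^{(c)}}{||(V^{(n)}R)^{(c)}||_2} = \frac{V^{(n,c)}R}{||V^{(n,c)}R||_2} = \frac{V^{(n,c)}}{||V^{(n,c)}||_2}R,
\end{align}
where the last equality uses norm-preservation in the denominator and linearity of right-multiplication by $R$ in the numerator. Stacking over $c$, the first factor evaluated at $V^{(n)}R$ equals the first factor at $V^{(n)}$, with every row right-multiplied by $R$.

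Next I would handle the second factor. Since $||(V^{(n)}R)^{(c)}||_2 = ||V^{(n,c)}||_2$, the vector of norms $\left[||V^{(n,c)}||_2\right]_{c=1}^C$ is \emph{unchanged} by the rotation, so the standard \text{LayerNorm} applied to it returns the identical length-$C$ vector, and broadcasting against $\mathds{1}_{1\times 3}$ gives the identical $C\times 3$ matrix of scalars. Thus the second factor is rotation-invariant. Combining the two factors via the elementwise product $\odot$: the rotation-invariant scalar matrix multiplies, entrywise, the matrix whose rows have each been right-multiplied by $R$. Because each scalar in row $c$ is constant across the three spatial coordinates (it is replicated by $\mathds{1}_{1\times 3}$), applying $R$ to a row commutes with scaling that row by its scalar, so the net effect is that the whole output is right-multiplied by $R$, giving $\textup{VN-LayerNorm}(V^{(n)}R) = \textup{VN-LayerNorm}(V^{(n)})R$.

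The only genuinely delicate step is the final commutation between the elementwise scaling and the right-multiplication by $R$: one must verify that $(\vv R)\cdot s = (s\vv)R$ for a scalar $s$ and row vector $\vv$, and that the $\mathds{1}_{1\times 3}$ broadcast guarantees each row is scaled by a single scalar (so the per-coordinate scaling does not interfere with the mixing of coordinates that $R$ performs). This is elementary but is the place where a careless argument could break, so I would state it explicitly at the row level before concluding.
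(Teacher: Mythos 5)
Your proof is correct and follows essentially the same route as the paper's: both hinge on the rotation invariance of the per-channel norms $||V^{(n,c)}R||_2 = ||V^{(n,c)}||_2$ (which makes the LayerNorm factor invariant) and on the fact that scaling a row by a scalar commutes with right-multiplication by $R$. The paper simply carries this out in a single per-channel chain of equalities rather than splitting the two factors as you do, but the content is identical, including your explicit attention to the final scalar-rotation commutation step.
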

\begin{proof}
\begin{align}
    \textup{VN-LayerNorm}(V^{(n)}R)^{(c)} &= \frac{V^{(n,c)}R}{||V^{(n,c)}R||_2}\text{LayerNorm}\left(\left[||V^{(n,c)}R||_2\right]_{c'=1}^C\right)^{(c)}\\
    &\overset{(*)}{=} \frac{V^{(n,c)}R}{||V^{(n,c)}||_2}\text{LayerNorm}\left(\left[||V^{(n,c)}||_2\right]_{c'=1}^C\right)^{(c)}\\
    &=\left[ \frac{V^{(n,c)}}{||V^{(n,c)}||_2}\text{LayerNorm}\left(\left[||V^{(n,c)}||_2\right]_{c'=1}^C\right)^{(c)}\right]R\\
    &=  \textup{VN-LayerNorm}(V^{(n)})^{(c)}R\\
    &= [\textup{VN-LayerNorm}(V^{(n)})R]^{(c)} \\
    \Rightarrow \textup{VN-LayerNorm}(V^{(n)}R) &= \textup{VN-LayerNorm}(V^{(n)})R,
\end{align}
where $(*)$ holds from invariance of vector norms to rotations.
\end{proof}

\subsection{Definitions of VN layers from \citet{deng2021vector}}

\paragraph{VN-ReLU layer}
The VN-ReLU layer is constructed as follows: from a given representation $V^{(n)}\in\mathbb{R}^{C\times 3}$, we compute a feature set $q\in\mathbb{R}^{C\times 3}$:
\begin{align}
    q \triangleq WV^{(n)},~~~W\in\mathbb{R}^{C\times C}.
\end{align}
Then, we compute a set of $C$ ``learnable directions'' $k\in\mathbb{R}^{C\times 3}$:
\begin{align}
k \triangleq UV^{(n)},~~~U\in\mathbb{R}^{C\times C}.
\end{align}
Note that $W,U$ are learnable square matrices. Finally, we compute the output of the VN-ReLU operation $\text{VN-ReLU}(\cdot~; W,U): \mathbb{R}^{C\times 3}\rightarrow \mathbb{R}^{C\times 3}$ as follows:
\begin{small}
\begin{align}
    \hspace{-3mm}\text{VN-ReLU}(V^{(n)})^{(c)} \triangleq \begin{cases} q^{(c)}~~ \text{if}~\langle q^{(c)},k^{(c)}\rangle \geq 0\\ q^{(c)}- \langle q^{(c)},\frac{k^{(c)}}{||k^{(c)}||}\rangle \frac{k^{(c)}}{||k^{(c)}||} ~~ \text{o.w.}
    \end{cases}
\end{align}
\end{small}
Otherwise stated: if the inner product between the feature $q^{(c)}$ and the learnable direction $k^{(c)}$ is positive, return $q^{(c)}$, else return the projection of $q^{(c)}$ onto the plane defined by the direction $k^{(c)}$. It can be readily shown that VN-ReLU is rotation-equivariant (for a proof, see Appendix \ref{sec:vn_equivariance}).

\paragraph{VN-Invariant layer} $\text{VN-Invariant}(\cdot~; W):\mathbb{R}^{C\times 3} \rightarrow \mathbb{R}^{C\times 3}$ is defined as:
\begin{small}
\begin{align}
    \text{VN-Invariant}(V^{(n)}; W) \triangleq V^{(n)}\text{VN-MLP}(V^{(n)}; W)^\intercal,
\end{align}
\end{small}
where $\text{VN-MLP}(\cdot~; W): \mathbb{R}^{C\times 3}\rightarrow \mathbb{R}^{3\times 3}$ is a composition of VN-Linear and VN-ReLU layers, and $W$ is the set of all learnable parameters in $\text{VN-MLP}$. It can be easily shown that $\text{VN-Invariant}$ is rotation-invariant (see Appendix \ref{sec:vn_equivariance} for a proof).
\paragraph{VN-Batch Norm, VN-Pool} For rotation-equivariant analogs of the standard batch norm and pooling operations, we point the reader to \citet{deng2021vector}.

\subsection{Invariance \& equivariance of VN layers of \citet{deng2021vector}}\label{sec:vn_equivariance}

\begin{proposition}{\citep{deng2021vector}}
$\textup{VN-Linear}(\cdot~; W): \mathbb{R}^{C\times 3}\rightarrow \mathbb{R}^{C'\times 3}$ is rotation-equivariant.
\end{proposition}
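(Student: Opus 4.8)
The plan is to verify the equivariance definition directly from the layer's definition, $\text{VN-Linear}(V^{(n)}; W) = WV^{(n)}$, exploiting only the associativity of matrix multiplication. Concretely, the goal is to show that for every $R\in\textup{SO}(3)$ and every $V^{(n)}\in\mathbb{R}^{C\times 3}$ one has $\text{VN-Linear}(V^{(n)}R; W) = \text{VN-Linear}(V^{(n)}; W)R$, which is exactly the equivariance condition of Definition 2 specialized to this map.

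First I would substitute the rotated input into the definition, writing $\text{VN-Linear}(V^{(n)}R; W) = W(V^{(n)}R)$. The key observation is that the weight matrix $W\in\mathbb{R}^{C'\times C}$ acts on the \emph{channel} dimension by left multiplication, while the rotation $R\in\mathbb{R}^{3\times 3}$ acts on the \emph{spatial} dimension by right multiplication. Since these multiply the $C\times 3$ matrix $V^{(n)}$ on opposite sides, associativity gives $W(V^{(n)}R) = (WV^{(n)})R$; re-applying the definition then yields $(WV^{(n)})R = \text{VN-Linear}(V^{(n)};W)R$, completing the argument.

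There is essentially no obstacle to overcome: the entire content of the proposition is the single identity $W(V^{(n)}R) = (WV^{(n)})R$, a one-line consequence of associativity. What is worth emphasizing is \emph{why} it holds so effortlessly. Unlike the SE(3)-Transformer construction discussed in Appendix \ref{sec:se3transformer_comparison}, where the weights act on the same (spatial) dimension as the rotation and hence must be specially engineered so that $WR = RW$, here the separation of the channel and spatial dimensions means that \emph{any} choice of $W$ is automatically equivariant. The only minor bookkeeping point is that the computation above is stated for a single token $V^{(n)}$; by the definition of $VR$ as the concatenation of the individually rotated matrices $V^{(n)}R$, the claim extends tokenwise to the whole tensor with no additional work.
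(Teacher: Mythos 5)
Your proof is correct and matches the paper's argument exactly: both reduce the claim to the single associativity identity $W(V^{(n)}R) = (WV^{(n)})R$, which holds because $W$ acts on the channel dimension (left multiplication) while $R$ acts on the spatial dimension (right multiplication). The additional remarks you make contrasting this with the SE(3)-Transformer and on tokenwise extension are consistent with the paper's own discussion and do not change the substance of the proof.
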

\begin{proof}
\begin{align}
\textup{VN-Linear}(V^{(n)}R; W) \triangleq WV^{(n)}R = (WV^{(n)})R = \textup{VN-Linear}(V^{(n)}; W)R
\end{align}
\end{proof}
\begin{proposition}{\citep{deng2021vector}}
$\textup{VN-ReLU}:\mathbb{R}^{C\times 3}\rightarrow \mathbb{R}^{C\times 3}$ is rotation-equivariant.
\end{proposition}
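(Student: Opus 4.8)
The plan is to prove that $\textup{VN-ReLU}$ is rotation-equivariant by tracking how a rotation $R$ propagates through each piece of the definition. First I would observe that the two intermediate feature sets $q = WV^{(n)}$ and $k = UV^{(n)}$ are themselves rotation-equivariant: by the equivariance of $\textup{VN-Linear}$ (already proved above), feeding $V^{(n)}R$ into the layer produces $q' = W(V^{(n)}R) = (WV^{(n)})R = qR$ and similarly $k' = kR$. This is the key structural fact, and it reduces the problem to a per-channel statement about the piecewise map applied to the rotated features $q^{(c)}R$ and $k^{(c)}R$.

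Next I would argue that the branch selection is rotation-invariant. The branch is decided by the sign of the inner product $\langle q^{(c)}, k^{(c)}\rangle$. Since $\langle q^{(c)}R, k^{(c)}R\rangle = q^{(c)}R R^\intercal k^{(c)\intercal} = \langle q^{(c)}, k^{(c)}\rangle$ using $RR^\intercal = I$ (this is the single-channel case of Proposition \ref{prop:frobenius_invariance}), the rotated inputs trigger exactly the same branch as the unrotated ones. This means I may analyze each of the two branches separately without worrying about a mismatch in which case is selected.

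For the ``positive'' branch the output is simply $q^{(c)}$, so under rotation it becomes $q^{(c)}R$, which is precisely $\textup{VN-ReLU}(V^{(n)})^{(c)}R$, as desired. For the projection branch I would substitute $q^{(c)}R$ and $k^{(c)}R$ into the formula $q^{(c)} - \langle q^{(c)}, \tfrac{k^{(c)}}{||k^{(c)}||}\rangle \tfrac{k^{(c)}}{||k^{(c)}||}$ and use two facts: rotation invariance of the vector norm, $||k^{(c)}R||_2 = ||k^{(c)}||_2$, and rotation invariance of the inner product $\langle q^{(c)}R, k^{(c)}R\rangle = \langle q^{(c)}, k^{(c)}\rangle$. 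The scalar coefficient is therefore unchanged, and the subtracted term becomes $\langle q^{(c)}, \tfrac{k^{(c)}}{||k^{(c)}||}\rangle \tfrac{k^{(c)}R}{||k^{(c)}||}$, so the entire branch output factors as $\bigl(q^{(c)} - \langle q^{(c)}, \tfrac{k^{(c)}}{||k^{(c)}||}\rangle \tfrac{k^{(c)}}{||k^{(c)}||}\bigr)R$. Assembling the per-channel results back into the full $C\times 3$ matrix gives $\textup{VN-ReLU}(V^{(n)}R) = \textup{VN-ReLU}(V^{(n)})R$.

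I do not anticipate a serious obstacle here; the proof is essentially bookkeeping once the three invariance/equivariance building blocks are in place. The only point requiring a little care is making the factoring of $R$ out of the projection branch explicit — one must treat $q^{(c)}$ and $k^{(c)}$ as row vectors and verify that every factor of $R$ appears on the right, so that the scalar coefficient (which absorbs no $R$ because of the cancellations) multiplies a single $k^{(c)}R$ rather than leaving a stray rotation inside the norm or inner product. Verifying that the branch condition genuinely selects the same case for rotated and unrotated inputs is the conceptual crux that legitimizes the case-by-case argument.
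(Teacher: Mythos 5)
Your proof is correct and follows essentially the same route as the paper's: establish equivariance of $q$ and $k$ via VN-Linear, use rotation invariance of inner products and norms to show the branch condition and scalar coefficient are unchanged, and factor $R$ out of both branches channel-by-channel. The paper merely compresses these steps into a single chain of case expressions, so no substantive difference exists.
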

\begin{proof}
\begin{align}
    \text{VN-ReLU}(V^{(n)}R)^{(c)} &\overset{(*)}{=} \begin{cases} q^{(c)}R~~~ \text{if}~\langle q^{(c)}R,k^{(c)}R\rangle \geq 0\\ q^{(c)}R- \langle q^{(c)}R,\frac{k^{(c)}R}{||k^{(c)}R||_2}\rangle \frac{k^{(c)}R}{||k^{(c)}R||_2} ~~~ \text{o.w.}\end{cases} \\ &\overset{(**)}{=} \begin{cases} q^{(c)}R ~~~ \text{if}~\langle q^{(c)},k^{(c)}\rangle \geq 0\\ q^{(c)}R- \langle  q^{(c)},\frac{ k^{(c)}}{||k^{(c)}||_2}\rangle \frac{k^{(c)}R}{||k^{(c)}||_2} ~~~ \text{o.w.}\end{cases}\\ &= \left[\begin{cases} q^{(c)}~~~ \text{if}~\langle q^{(c)},k^{(c)}\rangle \geq 0\\ q^{(c)}- \langle  q^{(c)},\frac{ k^{(c)}}{||k^{(c)}||}\rangle \frac{k^{(c)}}{||k^{(c)}||_2} ~~~ \text{o.w.}\end{cases}\right]R\\
     &= \text{VN-ReLU}(V^{(n)})^{(c)}R\\
     &= [\text{VN-ReLU}(V^{(n)})R]^{(c)}\\
     \Rightarrow \text{VN-ReLU}(V^{(n)}R) &= \text{VN-ReLU}(V^{(n)})R,
\end{align}
where $(*)$ holds because $q$ and $k$ are rotation-equivariant w.r.t. $V^{(n)}$ and $(**)$ holds because vector inner products are rotation-invariant. 
\end{proof}
\begin{proposition}{\citep{deng2021vector}}
$\textup{VN-Invariant}:\mathbb{R}^{C\times 3}\rightarrow \mathbb{R}^{C\times 3}$ is rotation-invariant.
\end{proposition}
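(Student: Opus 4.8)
The plan is to reduce the rotation-invariance of $\textup{VN-Invariant}$ to the rotation-\emph{equivariance} of its constituent $\textup{VN-MLP}$ together with the orthogonality $RR^\intercal = I_{3\times 3}$ of rotation matrices. First I would observe that $\textup{VN-MLP}(\cdot\,;W):\mathbb{R}^{C\times 3}\rightarrow\mathbb{R}^{3\times 3}$ is rotation-equivariant: by definition it is a composition of VN-Linear and VN-ReLU layers, each of which was already shown above to be rotation-equivariant, and the composition of equivariant maps is equivariant. Hence $\textup{VN-MLP}(V^{(n)}R;W) = \textup{VN-MLP}(V^{(n)};W)R$.

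Next I would substitute $V^{(n)}R$ into the definition $\textup{VN-Invariant}(V^{(n)};W)\triangleq V^{(n)}\textup{VN-MLP}(V^{(n)};W)^\intercal$, and then push the rotation through using equivariance of VN-MLP and orthogonality of $R$. Writing $M\triangleq \textup{VN-MLP}(V^{(n)};W)\in\mathbb{R}^{3\times 3}$, the computation is
\begin{align}
\textup{VN-Invariant}(V^{(n)}R;W) &= (V^{(n)}R)\,\textup{VN-MLP}(V^{(n)}R;W)^\intercal \\
&= (V^{(n)}R)(MR)^\intercal \\
&= V^{(n)}R R^\intercal M^\intercal \\
&= V^{(n)}M^\intercal = \textup{VN-Invariant}(V^{(n)};W),
\end{align}
where the second equality uses equivariance of VN-MLP, the third uses $(MR)^\intercal = R^\intercal M^\intercal$, and the fourth uses $RR^\intercal = I_{3\times 3}$ for $R\in\textup{SO}(3)$.

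There is essentially no hard step; the only point requiring care is the bookkeeping of the transpose. Equivariance of VN-MLP produces a trailing factor $R$ on the right of $M$, and because VN-Invariant applies $M$ \emph{transposed} on the right of $V^{(n)}$, that trailing $R$ becomes a \emph{leading} $R^\intercal$ after transposition; this $R^\intercal$ then meets the $R$ coming from the rotated input $V^{(n)}R$ and cancels by orthogonality. The precise $V^{(n)}\,(\cdot)^\intercal$ structure of the definition is exactly what makes the output invariant rather than merely equivariant — a different placement or side of the multiplication would leave an uncancelled rotation factor.
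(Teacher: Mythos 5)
Your proof is correct and follows essentially the same route as the paper's: substitute the rotated input, invoke equivariance of VN-MLP (which the paper also cites at its starred step), transpose to turn the trailing $R$ into a leading $R^\intercal$, and cancel via $RR^\intercal = I$. Your explicit remark that VN-MLP's equivariance follows from composing the equivariant VN-Linear and VN-ReLU layers is a small elaboration the paper leaves implicit, but the argument is otherwise identical.
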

\begin{proof}
\begin{align}
    \text{VN-Invariant}(V^{(n)}R; W) &= (V^{(n)}R)\text{VN-MLP}(V^{(n)}R; W)^\intercal\\
    &\overset{(*)}{=} V^{(n)}R\left[\text{VN-MLP}(V^{(n)}; W)R\right]^\intercal\\
    &= V^{(n)}RR^\intercal\text{VN-MLP}(V^{(n)}; W)^\intercal\\
    &= V^{(n)}\text{VN-MLP}(V^{(n)}; W)^\intercal\\
    &=  \text{VN-Invariant}(V^{(n)}; W),
\end{align}
where $(*)$ holds by equivariance of VN-MLP.
\end{proof}
\end{document}